\documentclass[10pt]{article}

\usepackage[margin=1in]{geometry}
\usepackage{times}
\usepackage{amsmath,amssymb,amsfonts}
\usepackage{mathrsfs}
\usepackage{graphicx}
\usepackage{booktabs}
\usepackage{multirow}
\usepackage{xcolor}
\usepackage{hyperref}
\usepackage{enumitem}
\usepackage{array}
\usepackage{algorithm}
\usepackage{lmodern}
\usepackage{algpseudocode}
\usepackage{tikz}
\usepackage{caption}
\usepackage{subcaption}
\usepackage{tikz}
\usetikzlibrary{arrows.meta,positioning,decorations.pathreplacing}

\usepackage{amsthm}
\newtheorem{theorem}{Theorem}
\newtheorem{lemma}{Lemma}
\newtheorem{proposition}{Proposition}
\newtheorem{corollary}{Corollary}
\newtheorem{definition}{Definition}
\newtheorem{remark}{Remark}

\newcommand{\keywords}[1]{%
  \par\noindent\textbf{Keywords:} #1
}
\DeclareMathOperator*{\argminA}{arg\,min} 
\newcommand{\paperTitle}{Curvature Cryptanalysis of Smooth Transformer Feed-Forward Networks}

\title{\vspace{-1em}
\paperTitle
}

\author{
Munawar Hasan$^{1,2}$, Apostol Vassilev$^{1}$ \vspace{0.2cm} \\
$^{1}$National Institute of Standards and Technology, USA \\
$^{2}$Michigan Technological University, USA \vspace{0.2cm} \\
\texttt{\{munawar.hasan, apostol.vassilev\}@nist.gov}\\ \texttt{munawarh@mtu.edu}
}

\date{}

\begin{document}
\maketitle

\begin{abstract}
    We show that smooth two-layer feed-forward networks (FFNs) expose an additional structural model-extraction channel under a chosen-input raw-output oracle at the FFN branch. We consider transformer FFN branches with GELU or SiLU activations under chosen-input raw-output access, without access to parameters, gradients, or internal activations. We characterize and exploit a second-order leakage channel in which projected input Hessians form different mixtures of the same hidden symmetric rank-one factors induced by the FFN input weights. Centered finite-difference queries expose these mixtures, and joint factorization recovers the hidden FFN direction dictionary up to sign and permutation. We formalize the resulting Hessian collection as a partially symmetric decomposition and establish conditions for local identifiability and stability. We exploit vector-output stencil reuse to reduce the structural query cost by a factor of 16.
    
    On independently trained CIFAR-10 vision transformers, only \(16\) projected Hessians, constructed offline from a single shared stencil of \(8{,}193\) black-box queries, recover the hidden FFN directions with average absolute cosine alignment above \(0.91\), with \(86.7\%\) of GELU and \(91.1\%\) of SiLU directions exceeding \(0.90\) alignment. Recovery remains high across independently trained models, repeated extraction runs, and all transformer blocks, and finite-difference queries closely match the structural recovery obtained with exact Hessians. The recovered structure also supports functional extraction. Keeping the recovered directions fixed and fitting only the remaining FFN parameters yields high-fidelity substitutes with more than \(92\%\) top-1 agreement, while their test accuracy remains within \(1.24\) and \(0.57\) percentage points of the GELU and SiLU targets, respectively. Finally, output rounding and Gaussian noise substantially reduce recovery under a fixed attack configuration, but adapting the finite-difference step restores average alignment to \(0.9188\) and \(0.9081\), respectively. These results establish an end-to-end path from black-box second-order observations to hidden FFN-structure recovery and functional replacement. Under the stated oracle model, smooth FFN curvature exposes internal parameter geometry that behavioral fidelity alone cannot reveal. Code is available on Github\footnote{\url{https://github.com/mhasan08/curvature-cryptanalysis}}
\end{abstract}

\keywords{Model extraction  $\cdot$ Transformer security \& privacy$\cdot$ Curvature cryptanalysis $\cdot$ Structural extraction $\cdot$ Functional extraction}

\section{Introduction}
\label{sec:introduction}

Modern neural networks are increasingly exposed through interfaces that return rich numerical outputs rather than only discrete predictions. This access has made model extraction a practical concern: by repeatedly querying a target model, an adversary may construct a substitute that closely reproduces its observable behavior~\cite{gharami2025clone, wang2026black}. Prior mathematical identification work shows that smooth-network derivatives reveal weights~\cite{10.1093/imaiai/iaaa036, FIEDLER2023123,FORNASIER2025101749}. Most extraction attacks, however, are evaluated behaviorally. They ask whether the substitute matches the target's predictions, probabilities, or logits. Such agreement can reveal that a function has been approximated, but it does not necessarily reveal whether the target model's hidden parameter structure has been recovered. Two networks can behave similarly on a query distribution while possessing very different internal representations and weights. This distinction becomes particularly important for transformers. Their feed-forward networks (FFNs) contain a large fraction of the model parameters and repeatedly transform the hidden representation at every block. If black-box access exposes not only the behavior of an FFN but also information about the geometry of its hidden weight vectors, then model extraction becomes a substantially more concerning problem: the interface leaks internal structure rather than merely supporting behavioral fidelity. This raises a natural question:

\begin{quote}
    \emph{Can black-box access reveal how a transformer FFN is internally parameterized---and can the leaked structure be turned into a high-fidelity functional extraction?}
\end{quote}

 We adopt the threat model in~\cite{carliniNN}, along with the assumptions on the oracle (black-box access to the target model) and the attacker's knowledge. Based on this, we demonstrate that smooth two-layer transformer FFN branches expose a practical structural extraction channel under a chosen-input raw-output oracle. Although the first-layer weight directions are never exposed directly, projected input Hessians contain them through a shared collection of symmetric rank-one factors. By varying 
 the output projection and, if necessary, the probe point, the adversary obtains different mixtures of the same hidden geometry, enabling recovery of the normalized first-layer directions. Crucially, this structural recovery is not only descriptive: once the directions are extracted, they can be held fixed while the remaining FFN parameters are fitted from additional black-box queries, yielding a high-fidelity functional replacement. We study an overcomplete vector-output setting, quantify query requirements, and connect structural recovery to downstream replacement.

This observation suggests a cryptanalytic view of model extraction. Rather than treating each query only as an input--output example for training a surrogate, we treat oracle responses as measurements from which hidden structure can be inferred. The resulting problem resembles recovery of a shared latent dictionary: estimate multiple projected Hessians, identify the rank-one factors common to them, and recover the corresponding first-layer directions up to the unavoidable sign and permutation ambiguities. Importantly, the structural question is distinct from functional imitation. A surrogate may achieve high agreement without identifying these directions, while a successful structural attack must recover them directly. Turning this observation into a practical black-box attack is not immediate. The adversary does not observe derivatives and must estimate second-order information using only oracle responses. The resulting Hessian collection is highly undercomplete relative to the number of hidden units, its factorization is nonconvex, finite-difference measurements amplify output perturbations at small step sizes, and low reconstruction error alone does not guarantee that the correct hidden factors have been identified. These issues require both a formal understanding of when the shared curvature representation is locally identifiable and an extraction procedure that can recover its factors from finite-difference measurements.

We address these challenges by developing a curvature-based structural extraction framework for smooth transformer FFNs with GELU~\cite{hendrycks2016gaussian} and SiLU~\cite{ramachandran2017searching,elfwing2018sigmoid} activations. We characterize the projected Hessians as a partially symmetric decomposition with shared rank-one atoms, derive conditions for local identifiability and stability, and instantiate the resulting attack using chosen-input finite differences followed by joint dictionary recovery. We then separately evaluate whether the extracted directions can support functional extraction when the remaining FFN parameters are fitted.

% Our experiments on independently trained vision transformers show that the leakage reveals internal structure rather than merely approximating behavior. Using only \(16\) projected Hessians, the attack recovers the hidden FFN directions with average absolute cosine alignment \(0.9644\) for GELU and \(0.9476\) for SiLU, with more than \(91\%\) of directions exceeding \(0.90\) alignment. Recovery remains high across independently trained models, repeated extraction runs, and transformer blocks. The recovered structure also supports functional extraction: fixing the recovered directions and fitting only the remaining FFN parameters produces substitutes with more than \(93\%\) top-1 agreement with the target classifiers. Finite-difference queries closely match the structural recovery obtained with exact Hessians, while output perturbations that appear protective at one measurement scale can be substantially weakened by adapting that scale. These results expose a qualitatively different form of model leakage. Black-box outputs can encode enough second-order information to reveal hidden FFN geometry and reuse that structure in a high-fidelity substitute. In this setting, the model interface leaks more than behavior: it leaks part of the parameterization itself.

Our experiments on independently trained vision transformers show that the leakage reveals internal structure rather than merely approximating behavior. Using only \(16\) projected Hessians, the attack recovers the hidden FFN directions with average absolute cosine alignment \(0.9187\) for GELU and \(0.9402\) for SiLU, with more than \(86\%\) of directions exceeding \(0.90\) alignment. Recovery remains high across independently trained models, repeated extraction runs, and transformer blocks. The recovered structure also supports functional extraction: fixing the recovered directions and fitting only the remaining FFN parameters produces substitutes with more than \(92\%\) top-1 agreement with the target classifiers. Finite-difference queries closely match the structural recovery obtained with exact Hessians, while output perturbations that appear protective at one measurement scale can be substantially weakened by adapting that scale. These results expose a qualitatively different form of model leakage. Black-box outputs can encode enough second-order information to reveal hidden FFN geometry and reuse that structure in a high-fidelity substitute. In this setting, the model interface leaks more than behavior: it leaks part of the parameterization itself.

\paragraph{Contributions.}
This work makes the following contributions:
\begin{itemize}
    \item We formalize black-box structural extraction of smooth transformer FFN branches under a chosen-input raw-output oracle. The adversary observes only \(g_\theta(x)=W_2\phi(W_1x+b_1)+b_2\) and receives no parameters, gradients, or internal activations. We distinguish recovery of the hidden FFN direction structure from the subsequent task of functional extraction.

    \item   We characterize and exploit second-order structural leakage in smooth transformer FFNs. Building on the known connection between smooth-network derivatives and hidden weights, we show how projected FFN Hessians expose a shared rank-one direction dictionary under a black-box security oracle. 

    \item We develop a theory explaining when the leaked curvature is sufficient for structural recovery. We formulate the Hessian collection as a partially symmetric decomposition, establish local identifiability of the hidden directions up to sign and permutation, derive a necessary measurement-count condition, and prove local stability under perturbed observations.

    \item We construct a practical black-box curvature-extraction attack. Projected Hessians are estimated using centered finite differences, requiring exactly \(2d^2+1\) oracle queries per distinct probe stencil, where \(d\) is the FFN input dimension. The shared direction dictionary is then recovered through random multi-restart joint optimization, with the final solution selected solely by the observed Hessian reconstruction loss. We optimize the attack to increase its power by taking into account  that the same finite-difference stencil at one probe point gives the Hessian of every output coordinate simultaneously, thus allowing a substantial reduction of the number of queries to the oracle.

    % \item We demonstrate that the leakage is reproducible, nontrivial, and operationally useful. Using only \(16\) projected Hessians, the attack recovers the hidden FFN directions with average absolute cosine alignment \(0.9644\) for GELU and \(0.9476\) for SiLU, remaining stable across independently trained target models, extraction runs, and transformer blocks. The recovered structure also supports functional extraction: fixing the recovered directions and fitting only the remaining FFN parameters yields substitutes with more than \(93\%\) top-1 agreement and accuracy drops of only \(0.90\) and \(0.62\) percentage points for GELU and SiLU, respectively. Finally, we show that output rounding and Gaussian noise can be substantially weakened by adapting the finite-difference step, indicating that fixed-step evaluations can overestimate extraction resistance.
    \item We demonstrate reproducible, end-to-end extraction. Only \(16\) projected Hessians recover GELU and SiLU directions with mean alignment \(0.9187\) and \(0.9402\), remaining stable across checkpoints, extraction runs, and transformer blocks. Fixing these directions yields substitutes with over \(92\%\) top-1 agreement and accuracy drops below \(1.25\) percentage points. Adaptive step selection also substantially weakens the evaluated rounding and noise defenses.
\end{itemize}

\paragraph{Paper organization.}
Section~\ref{sec:related-work} reviews related work. Section~\ref{sec:framework} defines the target FFN, the chosen-input raw-output oracle, and the structural and functional extraction objectives. Section~\ref{sec:curvature} develops the curvature leakage mechanism: it shows how projected FFN Hessians share hidden rank-one factors, reformulates these measurements as a partially symmetric decomposition, and establishes conditions for local identifiability and stability. It also clarifies the boundary of the analysis for residual connections, normalization, and piecewise-affine activations. Section~\ref{sec:attack} turns this structure into a practical black-box attack by estimating projected Hessians with finite differences, jointly recovering the shared first-layer direction dictionary, and optionally completing the remaining FFN parameters for full-model replacement. Section~\ref{sec:experiments} then evaluates structural recovery, functional replacement, query complexity, and robustness to perturbed oracle responses. Finally, section~\ref{sec:conclusion} presents conclusion.

\section{Related Work}
\label{sec:related-work}

Model extraction was initially studied as the problem of reproducing the functionality of a remotely deployed model through prediction queries. Tram\`er et al.~\cite{tramer2016stealing} demonstrated that prediction APIs can expose enough information to construct high-fidelity copies of several machine-learning model classes. Subsequent work extended this perspective to deep neural networks, where queried labels, probabilities, or logits are used as supervision for training a substitute model \cite{orekondy2019knockoff,jagielski2020high}. A separate line of work asks whether black-box access can reveal model parameters rather than merely approximate model behavior; for deep ReLU networks, such access can in principle recover architecture, weights, and biases by exploiting the boundaries between piecewise-linear regions~\cite{rolnick2020reverse}. Building on this direction, model extraction has been framed explicitly as a cryptanalytic problem, with differential chosen-input attacks recovering ReLU-network parameters to high precision~\cite{carliniNN}. Later work improved the computational efficiency of this approach and established polynomial-time extraction procedures for broad classes of ReLU networks~\cite{canales2024polynomial}. Theoretical work has likewise studied identifiability and query-based recovery of shallow and deep ReLU networks~\cite{bona2023parameter, liu2026navigating}. These approaches exploit a property fundamentally different from this paper. ReLU networks are piecewise affine, and their hidden units are revealed through activation boundaries or changes in local linear behavior. Smooth transformer activations such as GELU and SiLU have no corresponding piecewise-linear boundaries. We instead exploit their nonzero second-order curvature: projected input Hessians expose repeated mixtures of the same hidden symmetric rank-one factors. Our attack therefore replaces boundary localization with shared-curvature factorization and targets the first-layer direction dictionary of a smooth transformer FFN.

\paragraph{Derivative-based identification of smooth networks.} A separate line of mathematical work has shown that derivatives of smooth neural networks can reveal hidden weight structure. Fornasier et al.\cite{10.1093/imaiai/iaaa036} recover weights of shallow ridge-function networks using approximate first- and second-order derivatives, exploiting the rank-one matrices induced by hidden weight vectors. Subsequent work extends this approach to smooth deep and vector-valued networks through Hessian-derived “entangled weights” and establishes stable recovery from finite input-output samples~\cite{FIEDLER2023123}. More recent work~\cite{FORNASIER2025101749} addresses wide shallow networks with biases in the overcomplete regime $D<m<D^2$, recovering weight directions from second-order information before estimating the remaining parameters. Recently, Asselineau et al.~\cite{cryptoeprint:2026/253} extended model-stealing techniques to a broad class of non-linear
activation functions. Our work builds on this mathematical identification perspective but addresses a complementary security question: whether the same derivative structure constitutes an exploitable model-extraction channel in transformer FFN deployments. We study trained GELU/SiLU transformer FFNs under a chosen-input raw-output security oracle, formulate their projected Hessians as a shared partially symmetric decomposition, analyze local identifiability and conditioning in the resulting overcomplete vector-output problem, and connect recovered first-layer geometry to downstream functional replacement.

Model extraction of transformer-based systems has become increasingly popular, due to wide range of usability of transformer models. Algebraic and hybrid extraction attacks have been studied in grey-box language-model settings with a public encoder and private classification layer~\cite{zanella2021grey}. More recently, black-box extraction has been cast in cryptographic terms through bounded behavioral indistinguishability, where a surrogate is judged by whether it can be distinguished from the target under a bounded query procedure~\cite{hasan2026bounded}.  Several other approaches leverage observable properties of deployed models to infer internal or proprietary information~\cite{muvskardin2023testing,chaudhuri2025energon}.

Our setting is complementary but targets a different internal object. Rather than recovering a public-encoder classifier or an embedding/output projection, we study an internal smooth FFN branch and recover the normalized directions of its first linear map from second-order oracle measurements. We then retain those extracted directions during surrogate completion, linking parameter geometry recovery to high-fidelity functional extraction. To our knowledge, prior transformer extraction work has not exploited the shared Hessian structure of smooth FFNs to recover their hidden first-layer direction dictionary.

\paragraph{Output perturbation and extraction defenses.}
A broad family of model-extraction defenses attempts to reduce the information available through prediction interfaces, perturb returned outputs, redirect the optimization of a surrogate, or embed detectable watermarks~\cite{cheng2025misleader}. Such defenses are typically evaluated against learning-based behavioral extraction. Our robustness study addresses a narrower question specific to curvature extraction: whether output rounding or additive noise suppresses finite-difference Hessian recovery.

\section{Extraction Framework}
\label{sec:framework}

We begin by defining the target feed-forward network (FFN), the oracle interface, and the structural and functional extraction objectives. Let \(\mathcal{X}\subseteq\mathbb{R}^{d}\) denote the FFN input space, where \(d\) is the input dimension, and let \(k\) denote the output dimension. A probabilistic adversary $\mathcal{A}$ adaptively submits a query $x\in\mathcal{X}$ to an oracle for a designated FFN branch and receives the corresponding output vector in $\mathbb{R}^{k}$, while the model parameters, gradients, and intermediate activations remain hidden.

\subsection{Target FFN and Oracle Interface}
\label{sec:oracle}

Let $g_\theta:\mathbb{R}^{d}\rightarrow\mathbb{R}^{k}$ denote a two-layer FFN branch of the form $g_\theta(x) = W_2\phi(W_1x+b_1)+b_2$, where $W_1\in\mathbb{R}^{m\times d}$, $b_1\in\mathbb{R}^{m}$, $W_2\in\mathbb{R}^{k\times m}$, $b_2\in\mathbb{R}^{k}$. Here, \(m\) is the hidden width and \(\phi\) is a smooth activation function, such as GELU~\cite{hendrycks2016gaussian} or SiLU~\cite{ramachandran2017searching, elfwing2018sigmoid}. When \(k=d\), we define the residual map as: $F_\theta(x)=x+g_\theta(x)$. Since the queried input $x \in \mathcal{X}$ is known, $\mathcal{A}$ can recover the branch output as: $g_\theta(x)=F_\theta(x)-x$. Hence, direct branch-output access and residual-output access are equivalent under this oracle model. To model rounded oracle outputs, let \(Q_\Delta:\mathbb{R}^{k}\rightarrow\mathbb{R}^{k}\) round each coordinate of an output vector \(z\in\mathbb{R}^{k}\) to the nearest multiple of $\Delta>0$: $$Q_\Delta(z)_r = \Delta\operatorname{round}\!\left(\frac{z_r}{\Delta}\right), \qquad r=1,\ldots,k.$$ For example, rounding to $p$ decimal places corresponds to $\Delta=10^{-p}$. Since each coordinate changes by at most half the quantization interval: $\|Q_\Delta(z)-z\|_\infty \leq \frac{\Delta}{2}$. We define \(Q_0(z)=z\) to represent an unrounded output.

\begin{definition}[Chosen-input raw-output FFN oracle]
\label{def:oracle}
Let $g_\theta:\mathcal{X}\subseteq\mathbb{R}^{d} \rightarrow \mathbb{R}^{k}$ be the target FFN. For a quantization interval \(\Delta\geq 0\) and a perturbation bound \(\tau\geq 0\), then for each adaptively selected query \(x\in\mathcal{X}\), the oracle \(\mathcal{O}_{g}^{(\Delta,\tau)}\) returns:
\begin{equation}
\mathcal{O}_{g}^{(\Delta,\tau)}(x)
=
Q_\Delta\!\left(g_\theta(x)+\xi_x\right),
\qquad
\xi_x\in\mathbb{R}^{k},
\qquad
\|\xi_x\|_\infty\leq\tau.
\label{eq:noisy-oracle}
\end{equation}
Here, \(\xi_x\) denotes an additive perturbation applied to the output associated with query \(x\). We define ideal raw-output oracle as: $ \mathcal{O}_{g} = \mathcal{O}_{g}^{(0,0)}$. $\mathcal{A}$ may choose each query as a function of previous oracle responses but receives neither parameters, gradients, nor internal activations. One returned vector in \(\mathbb{R}^{k}\) counts as one oracle query.
\end{definition}

\paragraph{Query and access model.}
The query \(x\in\mathcal{X}\) is an FFN input representation rather than necessarily a raw image or token sequence. The set \(\mathcal{X}\) may be the full representation space, a bounded subset, or a neighborhood of naturally occurring representations. The oracle therefore provides block-level, representation-space access, rather than a conventional label-only or final-logit interface. Such access may arise when an FFN module can be invoked independently, as in modular or split-inference deployments.

\paragraph{Adversary knowledge.}
The adversary knows the FFN architecture, including \(d\), \(k\), \(m\), the activation family \(\phi\), the residual wiring, and the target block. The parameters $W_1$, $b_1$, $W_2$ and $b_2$ remain unknown. Inferring the hidden width, activation family, or target-block location is outside the scope of this work.

\subsection{Extraction Objectives}
\label{sec:objectives}

Under the oracle model defined above, we now distinguish structural extraction from functional extraction. Structural extraction seeks to recover the directions of the rows of the first-layer matrix \(W_1\). Functional extraction instead seeks a surrogate branch whose outputs approximate those of the target FFN. When inserted into the original classifier, such a surrogate may also preserve the behavior of the complete model.

\begin{definition}[\((\varepsilon,\delta)\)-functional extraction][Adapted from~\cite{carliniNN}]
\label{def:functional-extraction}
Let $f,\widehat f:\mathcal{X}\rightarrow\mathbb{R}^{k}$. Let \(\mathcal{D}\) be a distribution over \(\mathcal{X}\), and let $\ell: \mathbb{R}^{k}\times\mathbb{R}^{k} \rightarrow\mathbb{R}_{\geq 0}$ be a loss function measuring the discrepancy between the outputs of \(f\) and \(\widehat f\). Then, we say that \(\widehat f\) is an \((\varepsilon,\delta)\)-functional extraction of \(f\) over \(\mathcal{D}\), with respect to \(\ell\), if:
\begin{equation}
    \Pr_{x\sim\mathcal{D}} \left[\ell\!\left(f(x),\widehat f(x)\right) \leq\varepsilon \right] \geq 1-\delta.
\end{equation}
\end{definition}

For system-level evaluation, let \(M\) denote the target classifier and let \(\widehat M\) denote the same classifier with the target FFN replaced by a surrogate branch. Their predictive agreement over an image distribution \(\mathcal{D}_{\mathrm{img}}\) is given by:
\begin{equation}
\label{eq:system-agreement}
    \operatorname{Agree}(M,\widehat M;\mathcal{D}_{\mathrm{img}}) = \Pr_{z\sim\mathcal{D}_{\mathrm{img}}} \left[ \arg\max_i M_i(z) = \arg\max_i\widehat M_i(z) \right].
\end{equation}
We additionally evaluate target and surrogate accuracy, KL divergence, and centered-logit mean-squared error. These metrics quantify empirical functional fidelity without requiring the selection of a single \((\varepsilon,\delta)\) pair.

Let $w_j^\top = W_{1,j:}$ denote the \(j\)-th row of \(W_1\), where $w_j\in\mathbb{R}^{d}$. We define its unit-norm direction as $u_j = \frac{w_j}{\|w_j\|_2}$. Then the structural extraction objective is to recover the collection: $U=\{u_1,\ldots,u_m\}$. Only the directions are targeted at this stage; their magnitudes are handled later during surrogate completion. Since the projected Hessians contain each direction through \(u_ju_j^\top\), the vectors \(u_j\) and \(-u_j\) produce the same curvature contribution. Furthermore, hidden-unit indices are not intrinsically observable. Structural recovery is therefore defined only up to sign and permutation.

\begin{definition}[Permutation-sign direction recovery]
\label{def:direction-recovery}
Let $U=\{u_j\}_{j=1}^{m}$ and $\widehat U=\{\widehat u_j\}_{j=1}^{m}$ be collections of unit vectors. Their matched direction-recovery score is defined as
\begin{equation}
\label{eq:dirrec}
    \operatorname{DirRec}(\widehat U,U) = \max_{\pi\in S_m} \frac{1}{m} \sum_{j=1}^{m} \left| \left\langle \widehat u_{\pi(j)},u_j \right\rangle \right|.
\end{equation}
Let \(\pi^\star\) denote an optimal permutation. For a threshold \(\rho\in[0,1]\), then we define:
\begin{equation}
\label{eq:hit-rate}
    \operatorname{Hit}_{\rho}(\widehat U,U) = \frac{1}{m} \sum_{j=1}^{m} \mathbf{1} \left\{ \left| \left\langle \widehat u_{\pi^\star(j)},u_j \right\rangle \right| \geq\rho \right\}.
\end{equation}
\end{definition}

The metric \(\operatorname{DirRec}\) reports the average matched alignment, while \(\operatorname{Hit}_{0.90}\) reports the fraction of directions whose matched absolute cosine similarity is at least \(0.90\). Both metrics require the true first-layer directions and are used only for scientific evaluation; they are not available to \(\mathcal{A}\) during extraction.

\section{Curvature Leakage and Identifiability}
\label{sec:curvature}
This section establishes the structural leakage mechanism underlying the attack. For a smooth FFN, scalar projections of the output produce input Hessians that are different linear combinations of the same hidden rank-one matrices induced by the rows of \(W_1\). Consequently, repeated curvature measurements expose a shared dictionary of first-layer directions. We formalize this structure, characterize its local identifiability, and analyze its stability under measurement perturbations.

\subsection{Residual Cancellation}
\label{sec:residual-cancellation}
We first show that the residual identity does not contribute to the second-order signal used by the extraction method.

\begin{lemma}[Residual identity has no curvature]
\label{lem:residual-cancellation}
Let \(k=d\), and $F_\theta(x)=x+g_\theta(x)$ be the residual map associated with the FFN branch \(g_\theta\) (cf., Section~\ref{sec:oracle}). Then, for any \(c\in\mathbb{R}^{d}\), we define: $$\nabla_x^2\!\left(c^\top F_\theta(x)\right) = \nabla_x^2\!\left(c^\top g_\theta(x)\right).$$
\end{lemma}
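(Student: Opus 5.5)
The plan is to expand the scalar projection by linearity and show that the identity part is affine in \(x\), so its Hessian vanishes. Fix \(c\in\mathbb{R}^{d}\). Since \(k=d\), the map \(F_\theta(x)=x+g_\theta(x)\) is well defined, and we can write
\[
c^\top F_\theta(x) = c^\top x + c^\top g_\theta(x).
\]
Both summands are twice differentiable. The first is linear in \(x\). The second is a composition of the affine map \(x\mapsto W_1x+b_1\), the coordinatewise smooth activation \(\phi\) (GELU or SiLU, both \(C^\infty\)), and the affine map \(h\mapsto W_2h+b_2\). This smoothness is what lets us apply the Hessian operator to each summand separately.

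Next, we use the linearity of \(\nabla_x^2\):
\[
\nabla_x^2\!\left(c^\top F_\theta(x)\right)=\nabla_x^2\!\left(c^\top x\right)+\nabla_x^2\!\left(c^\top g_\theta(x)\right).
\]
We then compute the first term directly. Its gradient is \(\nabla_x(c^\top x)=c\), which is constant in \(x\), so its Hessian is the zero matrix \(0_{d\times d}\). Substituting this gives the claimed identity at every \(x\in\mathbb{R}^{d}\).

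There is no substantive obstacle here. The only step needing care is confirming that \(g_\theta\) is \(C^2\), so that the Hessian exists and is additive across the two summands. This follows from the smoothness of \(\phi\) assumed in Section~\ref{sec:oracle}. I would also remark that the same argument covers the bias \(b_2\), which likewise contributes no curvature. Consequently, finite-difference Hessian estimates taken from residual outputs target exactly the same matrices as those taken from branch outputs, consistent with the equivalence of the two access modes noted after Definition~\ref{def:oracle}.
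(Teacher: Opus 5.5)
Your proof is correct and follows the paper's argument exactly: you split \(c^\top F_\theta(x)=c^\top x+c^\top g_\theta(x)\), use linearity of the Hessian, and note that the linear term has zero Hessian. Your explicit check that \(g_\theta\) is \(C^2\), so that both Hessians exist and add, is a point the paper leaves implicit.
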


\begin{proof}
Since $c^\top F_\theta(x) = c^\top x+c^\top g_\theta(x)$, and \(c^\top x\) is linear in \(x\), its Hessian is zero. Hence, we have:
\begin{equation}
    \nonumber
    \begin{split}
        \nabla_x^2\!\left(c^\top F_\theta(x)\right) &= \nabla_x^2(c^\top x) + \nabla_x^2\!\left(c^\top g_\theta(x)\right) \\
        &= \nabla_x^2\!\left(c^\top g_\theta(x)\right).
    \end{split}
\end{equation}
\end{proof}
Thus, observing the residual output preserves exactly the curvature of the nonlinear FFN branch, i.e., the identity path neither adds to nor obscures this second-order signal.

\begin{remark}[Effect of normalization before the FFN]
\label{rem:layernorm}
Lemma~\ref{lem:residual-cancellation} applies to the affine residual identity, but not to nonlinear transformations applied before the FFN. Consider $F_\theta(x) = x+g_\theta(\operatorname{LN}(x))$, where $\operatorname{LN}$ denotes layer normalization. Let $z=\operatorname{LN}(x)$ and $q(z)=c^\top g_\theta(z)$. Then the multivariate chain rule gives following:
\begin{equation}
    \nonumber
    \nabla_x^2 q(\operatorname{LN}(x)) = J_{\operatorname{LN}}(x)^\top \nabla_z^2 q(z)J_{\operatorname{LN}}(x) + \sum_{r=1}^{d}\frac{\partial q}{\partial z_r} \nabla_x^2\operatorname{LN}_r(x).
\end{equation}
The second term contains curvature introduced by LayerNorm itself. Our primary analysis therefore treats the post-normalization representation \(z\) as the FFN query variable. An oracle placed before LayerNorm requires separating these additional normalization-curvature terms from the curvature of the FFN branch.
\end{remark}

\subsection{Shared Hessian Mixtures}
\label{sec:hessian-mixture}
We next show that scalar projections of the FFN output produce Hessians that share the same first-layer rank-one factors. Recall that \(w_j\in\mathbb{R}^{d}\) denotes the \(j\)-th first-layer weight vector and \(v_j\in\mathbb{R}^{k}\) denotes the corresponding \(j\)-th column of \(W_2\). For any projection vector \(c\in\mathbb{R}^{k}\), define the scalar response $s_c(x)=c^\top g_\theta(x)$.

\begin{lemma}[Hessian mixture of a smooth FFN]
\label{lem:hessian-mixture}
Let $\phi$ be twice differentiable. Then we show that: 
\begin{equation}
    \label{eq:hessian-mixture}
    \nabla_x^2 s_c(x) = \sum_{j=1}^{m} (c^\top v_j) \phi''(w_j^\top x+b_{1,j}) w_jw_j^\top.
\end{equation}
\end{lemma}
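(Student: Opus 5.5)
The plan is a direct computation: expand the scalar response as a sum over hidden units, then apply the chain rule twice to each ridge-function summand. Write $g_\theta(x)=W_2\phi(W_1x+b_1)+b_2=\sum_{j=1}^m v_j\,\phi(w_j^\top x+b_{1,j})+b_2$, using that $W_2$ has columns $v_j$ and $W_1$ has rows $w_j^\top$. Projecting onto $c$ gives
\begin{equation}
\nonumber
s_c(x)=\sum_{j=1}^{m}(c^\top v_j)\,\phi(w_j^\top x+b_{1,j})+c^\top b_2 .
\end{equation}
The constant $c^\top b_2$ has zero Hessian. Each scalar coefficient $c^\top v_j$ is independent of $x$, so linearity of $\nabla_x^2$ reduces the claim to computing the Hessian of a single ridge function $h_j(x)=\phi(a_j(x))$, where $a_j(x)=w_j^\top x+b_{1,j}$ is affine.

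For one ridge function, the first step is $\nabla_x h_j(x)=\phi'(a_j(x))\,\nabla_x a_j(x)=\phi'(a_j(x))\,w_j$. Differentiating again, $w_j$ is constant and $\phi'\circ a_j$ is again a composition with an affine map. Hence
\begin{equation}
\nonumber
\nabla_x^2 h_j(x)=w_j\,\bigl(\nabla_x \phi'(a_j(x))\bigr)^\top=\phi''(a_j(x))\,w_jw_j^\top .
\end{equation}
The Hessian of $a_j$ vanishes, so there is no first-derivative term. Summing over $j$ with the weights $c^\top v_j$ yields \eqref{eq:hessian-mixture}.

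The only step needing care is regularity. Writing $\nabla_x^2$ presumes that $\phi'$ is differentiable at each point $a_j(x)$, which is exactly the stated twice-differentiability of $\phi$. To compute the Hessian by differentiating the gradient, I would check that $x\mapsto\phi'(a_j(x))$ is differentiable, which holds because it is the composition of a differentiable $\phi'$ with an affine map. Symmetry of the result is automatic, since $w_jw_j^\top$ is symmetric. I expect no genuine obstacle: the argument is bookkeeping with the chain rule. For GELU and SiLU, $\phi$ is $C^\infty$, so all of these conditions hold trivially.
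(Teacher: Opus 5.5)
Your proposal is correct and follows the paper's proof: you expand $s_c(x)=\sum_{j}(c^\top v_j)\phi(w_j^\top x+b_{1,j})+c^\top b_2$, take the gradient $\sum_j (c^\top v_j)\phi'(w_j^\top x+b_{1,j})w_j$, and differentiate once more to get the rank-one terms. Your additional regularity remark, that twice differentiability of $\phi$ together with the affine inner map suffices for the Hessian to exist, is a correct refinement of the paper's terser argument.
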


\begin{proof}
Using the row and column notation above, the projected response can be written as:
$$ s_c(x) = \sum_{j=1}^{m} (c^\top v_j) \phi(w_j^\top x+b_{1,j}) + c^\top b_2.$$
Differentiating with respect to \(x\) gives: $\nabla_x s_c(x) = \sum_{j=1}^{m} (c^\top v_j) \phi'(w_j^\top x+b_{1,j})w_j$. Differentiating once more yields Equation~\eqref{eq:hessian-mixture}.
\end{proof}

Now, for a collection of query--projection pairs $\{(x_t,c_t)\}_{t=1}^{T}$, define $H_t = \nabla_x^2 s_{c_t}(x_t)$ and $ \alpha_{t,j} = (c_t^\top v_j)
\phi''(w_j^\top x_t+b_{1,j})$. Then
\begin{equation}
    \label{eq:shared-hessian-dictionary}
    H_t = \sum_{j=1}^{m} \alpha_{t,j}w_jw_j^\top.
\end{equation}
The coefficients \(\alpha_{t,j}\) vary with the query point \(x_t\) and the projection \(c_t\), whereas the rank-one matrices \(w_jw_j^\top\) are shared across all measurements.

\subsection{Partially Symmetric Curvature Decomposition}
\label{sec:tensor-decomposition}
The shared Hessian mixtures in Equation~\eqref{eq:shared-hessian-dictionary} can be organized across all \(T\) measurements as a partially symmetric tensor decomposition. Define the normalized first-layer directions $ u_j = \frac{w_j}{\|w_j\|_2}$, where $j=1,\ldots,m$, and absorb the corresponding row magnitudes into the measurement
coefficients: $\gamma_{t,j} = \alpha_{t,j}\|w_j\|_2^2$. Then
\begin{equation}
    \label{eq:normalized-hessian-mixture}
    H_t = \sum_{j=1}^{m} \gamma_{t,j}u_ju_j^\top.
\end{equation}

\begin{proposition}[Partially symmetric curvature decomposition]
\label{prop:tensor-decomposition}
Let $\mathscr{H}\in\mathbb{R}^{d\times d\times T}$  be the third-order tensor whose \(t\)-th frontal slice is \(H_t\). For each hidden unit \(j\), define its coefficient vector $\gamma_j = (\gamma_{1,j},\ldots,\gamma_{T,j})^\top \in\mathbb{R}^{T}$. Then
\begin{equation}
\mathscr{H}
=
\sum_{j=1}^{m}
u_j\otimes u_j\otimes\gamma_j.
\label{eq:partial-symmetric-decomp}
\end{equation}
\end{proposition}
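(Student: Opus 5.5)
The identity in Proposition~\ref{prop:tensor-decomposition} can be checked entrywise, so the plan is to compare the \((a,b,t)\) entry of each side for arbitrary indices \(a,b\in\{1,\ldots,d\}\) and \(t\in\{1,\ldots,T\}\). First I fix the convention for the outer product of three vectors: for \(x\in\mathbb{R}^{d}\), \(y\in\mathbb{R}^{d}\) and \(z\in\mathbb{R}^{T}\), the tensor \(x\otimes y\otimes z\) has entries \((x\otimes y\otimes z)_{abt}=x_ay_bz_t\). I also fix that the \(t\)-th frontal slice of \(\mathscr{H}\) is the matrix \(\mathscr{H}_{:,:,t}\), so by definition \(\mathscr{H}_{abt}=(H_t)_{ab}\).

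Next I compute the left side by invoking Lemma~\ref{lem:hessian-mixture} in its normalized form, Equation~\eqref{eq:normalized-hessian-mixture}. This form follows from Equation~\eqref{eq:shared-hessian-dictionary} by substituting \(w_j=\|w_j\|_2u_j\), which gives \(w_jw_j^\top=\|w_j\|_2^2u_ju_j^\top\), and then absorbing the factor \(\|w_j\|_2^2\) into \(\gamma_{t,j}\). Here I need the rows of \(W_1\) to be nonzero so that \(u_j\) is defined; if some \(w_j=0\), that unit contributes nothing to any \(H_t\) and can be dropped, or assigned an arbitrary unit \(u_j\) with \(\gamma_j=0\). Taking the \((a,b)\) entry then gives
\[
\mathscr{H}_{abt}=(H_t)_{ab}=\sum_{j=1}^{m}\gamma_{t,j}\,u_{j,a}u_{j,b}.
\]

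For the right side, linearity of the sum together with the entrywise definition of the outer product gives
\[
\Bigl(\sum_{j=1}^{m}u_j\otimes u_j\otimes\gamma_j\Bigr)_{abt}=\sum_{j=1}^{m}u_{j,a}u_{j,b}\gamma_{t,j}.
\]
This is the same expression as the left side. Since \(a\), \(b\) and \(t\) were arbitrary, the two tensors are equal.

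I do not expect a real obstacle. The only points needing care are stating the slice and outer-product conventions explicitly, and noting the nonzero-row assumption behind the normalization. It is also worth remarking that the decomposition is partially symmetric: each slice \(H_t\) is symmetric, and the first two factors of every term are the same vector \(u_j\).
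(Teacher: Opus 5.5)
Your proposal is correct and matches the paper's proof: the paper compares the two sides slice by slice, noting that the $t$-th frontal slice of $\sum_{j} u_j\otimes u_j\otimes\gamma_j$ is $\sum_{j}\gamma_{t,j}u_ju_j^\top = H_t$, and your entrywise check is the same comparison. Your remark on zero rows of $W_1$ is a bit of care the paper leaves implicit: $u_j$ is then undefined, but $\gamma_j=0$ makes the choice of $u_j$ irrelevant.
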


\begin{proof}
For each \(t=1,\ldots,T\), the \(t\)-th frontal slice of the right-hand side of Equation~\eqref{eq:partial-symmetric-decomp} is $\sum_{j=1}^{m} \gamma_{t,j}u_ju_j^\top = H_t$. Therefore, the tensor equality holds.
\end{proof}

Equation~\eqref{eq:partial-symmetric-decomp} expresses the curvature measurements as a shared rank-one decomposition: the first two modes contain the same normalized direction \(u_j\), while the third mode records how the coefficient of that direction varies across measurements. Structural extraction can therefore be viewed as recovering the direction factors \(\{u_j\}_{j=1}^{m}\) from this partially symmetric decomposition. This formulation is more informative than an unconstrained subspace model. Even when \(T<m\), each matrix atom is restricted to the symmetric rank-one form \(u_ju_j^\top\), rather than being an arbitrary element of the span of the observed Hessians.

\subsection{Conditional Local Identifiability}
\label{sec:local-identifiability}
The partially symmetric decomposition in Equation~\eqref{eq:partial-symmetric-decomp} shows how a collection of directions and measurement coefficients generates the observed Hessians. We now ask the inverse question: \emph{under what conditions do the Hessians determine these factors uniquely in a neighborhood of the true solution?}

Let $ U=(u_1,\ldots,u_m)$ and $u_j\in\mathbb{S}^{d-1}$, where $\mathbb{S}^{d-1} = \{u\in\mathbb{R}^{d}:\|u\|_2=1\}$. Let $\Gamma=[\gamma_{t,j}] \in\mathbb{R}^{T\times m}$. For each measurement \(t\), define $$\Psi_t(U,\Gamma) = \sum_{j=1}^{m} \gamma_{t,j}u_ju_j^\top,$$
and collect all \(T\) measurements through the map: $ \Psi(U,\Gamma) = \bigl( \Psi_1(U,\Gamma), \ldots, \Psi_T(U,\Gamma) \bigr)$. Hence, \(\Psi\) maps a normalized direction dictionary and its measurement-specific coefficients to the corresponding collection of symmetric Hessian matrices. By construction in Equation~\eqref{eq:normalized-hessian-mixture}, the normalized directions satisfy $u_j=\frac{w_j}{\|w_j\|_2}$, such that $\|u_j\|_2=1$, and hence \(u_j\in\mathbb{S}^{d-1}\). Consequently, an infinitesimal perturbation \(\dot u_j\) that preserves this normalization must satisfy $u_j^\top \dot u_j=0$. We denote by \(J_{\mathrm{res}}(U,\Gamma)\) the differential of \(\Psi\) restricted to these tangent perturbations, together with arbitrary coefficient perturbations \(\dot\Gamma\).

\begin{theorem}[Local identifiability criterion]
\label{thm:local-identifiability}
Let \(J_{\mathrm{res}}(U,\Gamma)\) denote the restricted differential of \(\Psi\) over perturbations satisfying $u_j^\top \dot u_j=0$, for $j=1,\ldots,m$. For $t=1,\ldots,T$, if the only perturbations satisfying
\begin{equation}
\label{eq:jacobian-kernel}
\sum_{j=1}^{m} \left[ \dot\gamma_{t,j}u_ju_j^\top + \gamma_{t,j} \left( \dot u_ju_j^\top + u_j\dot u_j^\top \right) \right] = 0,
\end{equation}
are $\dot U=0$ and $\dot\Gamma=0$, then \(J_{\mathrm{res}}(U,\Gamma)\) is injective, and the factorization $ H_t = \sum_{j=1}^{m} \gamma_{t,j}u_ju_j^\top$ for $t=1,\ldots,T$, is locally identifiable up to simultaneous permutation of the atom--coefficient pairs and sign changes \(u_j\mapsto -u_j\).
\end{theorem}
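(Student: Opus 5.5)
The argument has two parts: first compute the restricted differential and identify its kernel with the set of solutions of Equation~\eqref{eq:jacobian-kernel}; then pass from injectivity of that differential to local uniqueness of the factorization. The symmetries (permutation and sign) enter only at the end, as the reason uniqueness holds only modulo them.

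\textbf{Step 1: the differential.} Let $\dot U$ be a tangent perturbation of the product of spheres, so $u_j^\top\dot u_j=0$, and let $\dot\Gamma\in\mathbb{R}^{T\times m}$ be arbitrary. Take a smooth curve $(U(s),\Gamma(s))$ on $(\mathbb{S}^{d-1})^m\times\mathbb{R}^{T\times m}$ with these velocities, for instance $u_j(s)=(u_j+s\dot u_j)/\|u_j+s\dot u_j\|_2$ and $\Gamma(s)=\Gamma+s\dot\Gamma$. Differentiating $\Psi_t$ at $s=0$ with the product rule gives exactly the left-hand side of Equation~\eqref{eq:jacobian-kernel}. Hence $\ker J_{\mathrm{res}}(U,\Gamma)$ is precisely the set of perturbations satisfying \eqref{eq:jacobian-kernel} for all $t$. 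Under the hypothesis this kernel is $\{0\}$, so $J_{\mathrm{res}}$ is injective, which is the first claim.

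\textbf{Step 2: from an injective differential to local uniqueness.} The domain $\mathcal{M}=(\mathbb{S}^{d-1})^m\times\mathbb{R}^{T\times m}$ is a smooth manifold, and $\Psi$ is polynomial, hence smooth. An injective differential at a point makes $\Psi$ an immersion there, and injectivity is an open condition, so $\Psi$ is an immersion on a neighbourhood. By the constant-rank (local immersion) theorem, $\Psi$ is injective on a possibly smaller neighbourhood $\mathcal{N}$ of $(U,\Gamma)$. For a quantitative version, I would note that injectivity gives $\sigma_{\min}(J_{\mathrm{res}})>0$. A Taylor expansion with a Lipschitz bound on the differential then yields
\[
\|\Psi(U',\Gamma')-\Psi(U,\Gamma)\|\ge \tfrac12\sigma_{\min}\,\operatorname{dist}\bigl((U',\Gamma'),(U,\Gamma)\bigr)
\]
in a small ball. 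Consequently any factorization of the same $\{H_t\}$ lying in $\mathcal{N}$ coincides with $(U,\Gamma)$.

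\textbf{Step 3: the symmetries.} Two families of transformations preserve every $\Psi_t$: simultaneous permutation of the pairs $(u_j,\gamma_{\cdot,j})$, and sign flips $u_j\mapsto -u_j$, since $u_ju_j^\top$ is unchanged. Each is an isometry of $\mathcal{M}$, so the images of $\mathcal{N}$ under these finitely many maps are neighbourhoods of the equivalent representatives, and $\Psi$ is injective on each of them. Local identifiability up to permutation and sign then means the following: any factorization sufficiently close to some representative of the orbit equals that representative.

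\textbf{Main obstacle.} I expect the delicate point to be the passage from injectivity of the differential to genuine local injectivity on the constrained manifold. The differential must be taken intrinsically, with tangent vectors rather than arbitrary perturbations of $u_j$. Otherwise the radial direction would create a spurious kernel element, namely rescaling $u_j$ while compensating in $\gamma_{\cdot,j}$. I would handle this by working in local charts of the sphere, or by invoking the immersion theorem on the embedded submanifold, and checking that the curve construction in Step 1 realises every tangent vector.
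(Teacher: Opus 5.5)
Your proposal is correct and follows essentially the same route as the paper's sketch. You identify $\ker J_{\mathrm{res}}(U,\Gamma)$ with the solutions of Equation~\eqref{eq:jacobian-kernel}, then use openness of full column rank and the constant-rank (immersion) theorem on $(\mathbb{S}^{d-1})^m\times\mathbb{R}^{T\times m}$ to get local injectivity of $\Psi$, with unit norms removing the scaling ambiguity and permutations and sign flips leaving every $H_t$ unchanged. Your care with tangent vectors and your quantitative bound $\|\Psi(U',\Gamma')-\Psi(U,\Gamma)\|\ge\tfrac12\sigma_{\min}\operatorname{dist}$ go slightly beyond what the theorem needs, and essentially anticipate the paper's local stability proposition.
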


\begin{proof}[Proof sketch]
Equation~\eqref{eq:jacobian-kernel} describes infinitesimal perturbations of \(U\) and \(\Gamma\) that produce no first-order change in any of the observed Hessians. These perturbations form the kernel of the restricted differential \(J_{\mathrm{res}}(U,\Gamma)\). By assumption, this kernel is trivial, so \(J_{\mathrm{res}}(U,\Gamma)\) is injective, equivalently full column rank, at the true factorization. Since the differential varies continuously with \(U\) and \(\Gamma\), full column rank is preserved in a sufficiently small neighborhood of \((U,\Gamma)\). The constant-rank theorem therefore implies that \(\Psi\) is locally one-to-one onto its image. Hence, sufficiently nearby factorizations that produce the same Hessian collection must coincide, apart from the remaining discrete symmetries. The unit-norm constraints remove the continuous scaling ambiguity between \(u_j\) and \(\gamma_{t,j}\), while sign changes and simultaneous permutations of the atom--coefficient pairs leave every Hessian unchanged.
\end{proof}

The theorem is deliberately conditional: it does not assert that every FFN or every choice of probe--projection pairs produces an injective restricted Jacobian. Instead, it isolates the local nondegeneracy condition under which the shared Hessian measurements uniquely determine the normalized directions and coefficients, apart from their unavoidable discrete symmetries.

\begin{corollary}[Necessary dimension count for the relaxed factorization]
\label{cor:dimension-count}
If \(\Gamma\) is treated as a free coefficient matrix, a necessary condition for the differential in Theorem~\ref{thm:local-identifiability} to be injective is: $m(d-1)+Tm \leq T\frac{d(d+1)}{2}$. When $\frac{d(d+1)}{2}>m$, this condition is equivalent to
\begin{equation}
    \label{eq:T-lower-bound}
    T \geq \frac{m(d-1)} {\frac{d(d+1)}{2}-m}.
\end{equation}
This bound is necessary but not sufficient.
\end{corollary}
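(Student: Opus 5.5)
The plan is a rank--nullity (dimension) count on the restricted differential \(J_{\mathrm{res}}(U,\Gamma)\) from Theorem~\ref{thm:local-identifiability}, followed by an algebraic rearrangement. An injective linear map cannot have a domain of larger dimension than its codomain, so I only need to identify the two spaces correctly.

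\emph{Domain.} For each \(j\), the tangent constraint \(u_j^\top\dot u_j=0\) restricts \(\dot u_j\) to the tangent space \(T_{u_j}\mathbb{S}^{d-1}\), which has dimension \(d-1\). Over the \(m\) atoms this gives \(m(d-1)\) parameters. Since \(\Gamma\) is a free matrix, \(\dot\Gamma\in\mathbb{R}^{T\times m}\) contributes another \(Tm\). The domain therefore has dimension \(m(d-1)+Tm\).

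\emph{Codomain.} Each slice perturbation \(\sum_j[\dot\gamma_{t,j}u_ju_j^\top+\gamma_{t,j}(\dot u_ju_j^\top+u_j\dot u_j^\top)]\) is a symmetric \(d\times d\) matrix. The image of \(J_{\mathrm{res}}\) thus lies in \((\mathrm{Sym}_d)^T\), whose dimension is \(T\,d(d+1)/2\).

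\emph{Necessary condition.} If \(J_{\mathrm{res}}\) is injective, then by rank--nullity its rank equals the domain dimension, and the rank is at most the codomain dimension. This gives
\[
m(d-1)+Tm\le T\,\tfrac{d(d+1)}{2}.
\]

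\emph{Rearrangement.} Collect the \(T\) terms to get
\[
m(d-1)\le T\Bigl(\tfrac{d(d+1)}{2}-m\Bigr).
\]
When \(d(d+1)/2>m\), the bracket is positive, so dividing by it preserves the inequality direction and yields Equation~\eqref{eq:T-lower-bound}. Multiplying back by the positive bracket recovers the previous inequality, so the two conditions are equivalent.

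\emph{Non-sufficiency.} It suffices to exhibit a degenerate configuration that satisfies the count but where injectivity fails. Take any parameter values meeting the inequality and set some \(\gamma_j=0\). Then \(\dot u_j\) drops out of Equation~\eqref{eq:jacobian-kernel}, since it appears only multiplied by \(\gamma_{t,j}\). Any nonzero tangent \(\dot u_j\) (which exists because \(d\ge 2\)) is then a nontrivial kernel element.

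This argument works for every \(T\ge 1\), including arbitrarily large \(T\). So no measurement count alone can force injectivity. The counting step is routine; the only point needing care is making this counterexample respect the count hypothesis, which the large-\(T\) freedom handles.
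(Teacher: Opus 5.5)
Your proof is correct and follows the same route as the paper. Both arguments compare the $m(d-1)+Tm$ dimensions of the tangent/coefficient space with the at most $T\,d(d+1)/2$ independent entries of the $T$ symmetric slices, and conclude that an injective linear map cannot go into a smaller space. The paper leaves the rearrangement implicit and only asserts non-sufficiency, so your explicit equivalence check and your $\gamma_{\cdot,j}=0$ counterexample (valid for $d\ge 2$) are welcome additions rather than a different approach; the counterexample mirrors the paper's later remark that $\beta_j=0$ makes a direction invisible.
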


\begin{proof}
The \(m\) normalized directions contribute \(m(d-1)\) continuous degrees of freedom, since each \(u_j\) lies on \(\mathbb{S}^{d-1}\). The free coefficient matrix contributes \(Tm\) additional degrees of freedom. On the observation side, each symmetric \(d\times d\) Hessian contains at most \(d(d+1)/2\) independent scalar entries, so \(T\) Hessians contain at most $T \cdot \frac{d(d+1)}{2}$ independent observations. An injective differential cannot map a higher-dimensional tangent space into a lower-dimensional observation space.
\end{proof}

For the dimensions used in our evaluation, $d=64$, $m=128$ Equation~\eqref{eq:T-lower-bound} gives $ T \geq \left\lceil \frac{128(64-1)} {\frac{64(65)}{2}-128} \right\rceil = 5$. This calculation does not establish that \(T=5\) is sufficient. It only shows that the structured factorization is not dimensionally constrained to require \(T\geq m\). The bound is also conservative relative to the original FFN model because it treats the coefficients in \(\Gamma\) as unconstrained nuisance parameters.

\paragraph{Numerical verification of the identifiability condition.}
Theorem~\ref{thm:local-identifiability} reduces local identifiability to injectivity of the restricted differential \(J_{\mathrm{res}}(U,\Gamma)\). We evaluate this condition numerically at the trained FFN instances used in our experiments. For each trained model and for each checkpoint, we fix one probe $x_p$ and use a nested sequence of output projections $c_t$. We then construct the FFN-induced coefficients $\gamma_{t,j} = (c_t^\top v_j) \phi''(w_j^\top x_p+b_{1,j}) \|w_j\|_2^2$, and evaluate \(J_{\mathrm{res}}(U,\Gamma)\) at the resulting \((U,\Gamma)\). We define $\kappa = \sigma_{\min} \!\left( J_{\mathrm{res}}(U,\Gamma) \right)$ and additionally report the scale-normalized quantity: $\frac{\kappa}{\sigma_{\max}(J_{\mathrm{res}})}$. A smallest singular value that is separated from zero supports the injectivity hypothesis of Theorem~\ref{thm:local-identifiability} up to the stated numerical tolerance. The magnitude of \(\kappa\), and particularly its scale-normalized counterpart, quantifies the local conditioning of the factorization. Section~\ref{sec:empirical-identifiability} evaluates these quantities across the independently trained GELU and SiLU models used in our experiments.

\subsection{Local Stability}
\label{sec:local-stability}
Local identifiability establishes uniqueness in a neighborhood of the true factorization, but does not by itself guarantee robustness to measurement error. We therefore examine how perturbations in the observed Hessians affect the recovered directions and coefficients. To account for the unavoidable sign and permutation ambiguities, define the symmetry-aware factor distance by:
\begin{equation}
    \label{eq:factor-distance}
    \operatorname{dist}_{\pm,\pi}^{\,2} \left((\widehat U,\widehat\Gamma),(U,\Gamma) \right)  = \min_{\substack{\pi\in S_m\\ \sigma\in\{\pm1\}^{m}}} \left[ \sum_{j=1}^{m} \left\| \widehat u_{\pi(j)}-\sigma_j u_j \right\|_2^2 + \sum_{j=1}^{m} \left\| \widehat\Gamma_{\cdot,\pi(j)} - \Gamma_{\cdot,j} \right\|_2^2 \right].
\end{equation}
The permutation \(\pi\) acts simultaneously on the recovered directions and their coefficient columns. The signs do not act on \(\Gamma\), since, $(-u_j)(-u_j)^\top = u_ju_j^\top$.

\begin{proposition}[Local stability under Hessian perturbations]
\label{prop:local-stability}
Assume the local identifiability condition of Theorem~\ref{thm:local-identifiability}. Let \(\kappa>0\) denote the smallest singular value of the differential of \(\Psi\), restricted to the tangent space of the normalized factorization. Suppose that the observed Hessians satisfy $\widehat H_t = H_t+E_t$, where $t=1,\ldots,T$, and define the aggregate perturbation magnitude: $$\eta = \left(\sum_{t=1}^{T} \|E_t\|_F^2 \right)^{1/2}.$$
For sufficiently small \(\eta\), let \((\widehat U,\widehat\Gamma)\) be a local least-squares solution within the identifiable neighborhood whose stacked reconstruction residual is \(O(\eta)\). Then: $\operatorname{dist}_{\pm,\pi}\left((\widehat U,\widehat\Gamma),(U,\Gamma) \right) = O\!\left(\frac{\eta}{\kappa}\right)$. Consequently, $$\operatorname{DirRec}(\widehat U,U) \geq 1-O\!\left(\frac{\eta^2}{\kappa^2}\right).$$
\end{proposition}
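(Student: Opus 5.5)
Fix the true factorization $(U,\Gamma)$ and work in the chart given by the tangent space of $(\mathbb{S}^{d-1})^m\times\mathbb{R}^{T\times m}$. The plan is a first-order Taylor argument on $\Psi$, using the lower bound $\kappa$ on the restricted differential.

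Parametrize nearby factorizations by a retraction, $u_j(\dot u_j)=(u_j+\dot u_j)/\|u_j+\dot u_j\|_2$ with $u_j^\top\dot u_j=0$, and $\Gamma+\dot\Gamma$. Write $\zeta=(\dot U,\dot\Gamma)$. Since $\Psi$ is a polynomial in $(U,\Gamma)$ and the retraction is smooth, Taylor's theorem on a compact neighborhood gives
\[
\Psi(\zeta)=\Psi(U,\Gamma)+J_{\mathrm{res}}\zeta+R(\zeta),\qquad \|R(\zeta)\|\le L\|\zeta\|^2,
\]
with $L$ depending only on $\max_{t,j}|\gamma_{t,j}|$ and the neighborhood size. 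By the hypothesis of Theorem~\ref{thm:local-identifiability}, $\|J_{\mathrm{res}}\zeta\|\ge\kappa\|\zeta\|$. Hence, on the ball $\|\zeta\|\le\kappa/(2L)$,
\[
\|\Psi(\zeta)-\Psi(U,\Gamma)\|\ge \tfrac{\kappa}{2}\|\zeta\|.
\]
This is a quantitative version of the local injectivity already used in Theorem~\ref{thm:local-identifiability}.

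Now let $(\widehat U,\widehat\Gamma)$ be the assumed local least-squares solution. Choose the permutation $\pi$ and signs $\sigma$ that realize the minimum in the distance~\eqref{eq:factor-distance}. These relabelings leave $\Psi$ unchanged, so we may assume $(\widehat U,\widehat\Gamma)$ is represented by some $\widehat\zeta$ in the identifiable ball. Then the triangle inequality gives
\[
\|\Psi(\widehat\zeta)-\Psi(U,\Gamma)\|\le\|\Psi(\widehat\zeta)-\widehat H\|+\|\widehat H-H\|\le C\eta+\eta,
\]
because the residual is $O(\eta)$ by assumption and $\|\widehat H-H\|=\eta$. Combining with the lower bound above yields $\|\widehat\zeta\|\le 2(C+1)\eta/\kappa$. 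The retraction is bi-Lipschitz near $\zeta=0$, so $\operatorname{dist}_{\pm,\pi}\le\|\widehat\zeta\|(1+O(\|\widehat\zeta\|))$, which is $O(\eta/\kappa)$. The phrase ``sufficiently small $\eta$'' is used exactly here: we need $2(C+1)\eta/\kappa\le\kappa/(2L)$ so that the estimate is self-consistent.

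For the corollary, use that for unit vectors, after choosing the sign,
\[
|\langle\widehat u_{\pi(j)},u_j\rangle|=1-\tfrac12\|\widehat u_{\pi(j)}-\sigma_ju_j\|_2^2.
\]
Averaging over $j$ and noting that $\operatorname{DirRec}$ maximizes over permutations gives
\[
\operatorname{DirRec}\ge 1-\tfrac{1}{2m}\operatorname{dist}^2_{\pm,\pi}=1-O(\eta^2/\kappa^2).
\]

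The main obstacle is the reduction step: showing that the estimate, after the optimal sign and permutation, actually lies in the single chart ball around $(U,\Gamma)$ where the quantitative injectivity holds. This is where ``within the identifiable neighborhood'' in the hypothesis must be read as exactly that condition. Making the constants $L$ and $C$ explicit is routine; the substantive point is that $\kappa$ from Theorem~\ref{thm:local-identifiability} is the only problem-dependent quantity that enters the bound.
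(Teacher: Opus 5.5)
Your proposal is correct and follows the same route as the paper's sketch. The lower bound $\kappa$ on the restricted differential converts the $O(\eta)$ discrepancy in observation space (reconstruction residual plus $\|\widehat H-H\|=\eta$, via the triangle inequality) into an $O(\eta/\kappa)$ factor error modulo sign and permutation. The unit-vector identity $1-\langle\widehat u,u\rangle=\tfrac12\|\widehat u-u\|_2^2$ then gives $\operatorname{DirRec}\geq 1-O(\eta^2/\kappa^2)$, and your Taylor-remainder bound on the ball of radius $\kappa/(2L)$ just makes explicit the paper's claim that the local inverse is Lipschitz with constant of order $1/\kappa$.

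One small correction: once the estimate is assumed to lie in that ball, no smallness of $\eta$ is needed. Smallness matters only when the estimate is known to lie in a larger injectivity neighborhood. There a compactness argument, with $\|\Psi(\zeta)-H\|$ bounded below by a positive constant off the ball, forces it into the ball.
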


\begin{proof}[Proof sketch]
Within the identifiable neighborhood, the restricted differential of \(\Psi\) is injective and has smallest singular value \(\kappa\). The local inverse is therefore Lipschitz on the image, with sensitivity proportional to \(1/\kappa\). A Hessian perturbation of stacked norm \(\eta\), together with a reconstruction residual of the same order, produces factor error \(O(\eta/\kappa)\), modulo the sign and permutation symmetries. After optimal alignment, the recovered and true directions are unit vectors. For unit vectors \(u\) and \(\widehat u\), we have: $1-\langle \widehat u,u\rangle = \frac{1}{2}\|\widehat u-u\|_2^2$. Thus, direction error of order \(O(\eta/\kappa)\) produces a cosine-alignment loss of order \(O(\eta^2/\kappa^2)\).
\end{proof}

The quantity \(\kappa\) separates uniqueness from numerical conditioning. A factorization may be locally identifiable while remaining highly sensitive to measurement error when \(\kappa\) is small.

\subsection{Piecewise-Affine Boundary Case}
\label{sec:relu-boundary}
The preceding curvature analysis relies on a twice-differentiable activation with nonzero second derivative. Piecewise-affine activations form an important boundary case because their classical curvature vanishes away from activation boundaries.

\begin{corollary}[Vanishing curvature for piecewise-affine activations]
\label{cor:relu}
Suppose that \(\phi\) is piecewise affine. At every point where its classical second derivative exists, $\phi''(z)=0$. Consequently, away from activation boundaries, $\nabla_x^2\left(c^\top g_\theta(x)\right) =0$. In particular, the smooth-curvature leakage channel vanishes almost everywhere for a ReLU FFN.
\end{corollary}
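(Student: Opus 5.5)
The plan is to reduce the statement to Lemma~\ref{lem:hessian-mixture} and then use the local structure of a piecewise-affine function. Since the activation acts coordinatewise, the key step is to control the second derivative of each scalar ridge function $x\mapsto\phi(w_j^\top x+b_{1,j})$ on a suitable open set.

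First, I make the one-dimensional claim precise. A piecewise-affine $\phi:\mathbb{R}\to\mathbb{R}$ has a finite (or locally finite) set of breakpoints $B$. On each open interval of $\mathbb{R}\setminus B$, $\phi(z)=a z+b$ for constants $a,b$. Hence $\phi'(z)=a$ is constant there and $\phi''(z)=0$. At a breakpoint where $\phi'$ jumps, $\phi'$ is not differentiable, so the classical second derivative does not exist. If there is no jump, $\phi$ is affine across that point and again $\phi''=0$. This proves the first assertion: $\phi''(z)=0$ wherever it exists. For ReLU, $B=\{0\}$.

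Second, I lift this to the input space. Define the activation-boundary set
\[
\mathcal{B}=\bigcup_{j=1}^{m}\{x\in\mathbb{R}^{d}: w_j^\top x+b_{1,j}\in B\}.
\]
This is a locally finite union of affine hyperplanes, assuming $w_j\neq 0$; rows with $w_j=0$ contribute a constant and can be dropped. On the open set $\mathbb{R}^d\setminus\mathcal{B}$, each preactivation $z_j$ stays locally inside a single affine piece of $\phi$. Therefore $\phi$ is $C^\infty$ in a neighborhood of $z_j$ with $\phi''(z_j)=0$. Lemma~\ref{lem:hessian-mixture} only needs twice differentiability locally, so it applies on a neighborhood of every such $x$. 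Every coefficient $(c^\top v_j)\phi''(w_j^\top x+b_{1,j})$ in Equation~\eqref{eq:hessian-mixture} then vanishes, giving $\nabla_x^2(c^\top g_\theta(x))=0$. Equivalently, $g_\theta$ is affine on each cell of the hyperplane arrangement, so its Hessian is zero there.

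Finally, for the ``almost everywhere'' claim, I note that $\mathcal{B}$ is a countable union of hyperplanes, each of Lebesgue measure zero in $\mathbb{R}^d$. Hence the identity holds off a null set, and every coefficient $\gamma_{t,j}$ in Equation~\eqref{eq:normalized-hessian-mixture} is zero. The tensor $\mathscr{H}$ in Proposition~\ref{prop:tensor-decomposition} is therefore identically zero and carries no information about the $u_j$.

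The main obstacle is mostly rigor rather than depth. The delicate part is justifying the local application of Lemma~\ref{lem:hessian-mixture}, which is stated for a globally twice-differentiable $\phi$. I would handle it by restricting to a neighborhood on which $\phi$ agrees with an affine function, or simply by the direct argument that $g_\theta$ is affine on each arrangement cell.
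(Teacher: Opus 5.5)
Your proof is correct and uses the same fact as the paper's one-line argument: on each region with a fixed activation pattern (your cells of the arrangement off $\mathcal{B}$), $g_\theta$ is affine, so every projection $c^\top g_\theta$ has zero classical Hessian. This holds whether you read it off directly or, as in your main route, from the vanishing coefficients of Lemma~\ref{lem:hessian-mixture} applied locally. The paper states only this, so your one-dimensional argument for $\phi''=0$, your handling of rows with $w_j=0$, and your null-set argument for the ``almost everywhere'' claim fill in details it leaves implicit. One small wording fix: at a kink it is $\phi'$ itself that fails to exist, not merely its derivative.
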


\begin{proof}
Within any region having a fixed activation pattern, a piecewise-affine FFN is an affine function of its input. Every scalar projection of its output is therefore affine within that region and has zero classical Hessian.
\end{proof}

\begin{remark}[Finite differences across activation boundaries]
\label{rem:relu-finite-difference}
A finite-difference stencil may cross a ReLU activation boundary and produce a nonzero second difference even when the classical Hessian at the query point is zero. Such a response is caused by a nonsmooth change in activation pattern rather than by the smooth Hessian mixture of Lemma~\ref{lem:hessian-mixture}. Recovering ReLU networks through boundary localization therefore constitutes a different extraction mechanism and is outside the scope of the present smooth-curvature analysis.
\end{remark}

\begin{figure}[t]
\centering
\begin{tikzpicture}[
    node distance=1.6cm and 0.9cm,
    >=Latex,
    every node/.style={align=center},
    box/.style={
        draw,
        rounded corners,
        thick,
        minimum height=1.2cm,
        minimum width=3.7cm,
        inner sep=6pt
    }
]

\node[box] (hess) {Projected\\ Hessian Collection};

\node[box, right=of hess] (dict) {Shared Rank-One\\ Dictionary Recovery};

\node[box, right=of dict] (comp) {Optional Surrogate\\ Completion};

\draw[->, thick] (hess) -- (dict);
\draw[->, thick] (dict) -- (comp);

% Bottom brace for structural extraction
\draw[decorate, decoration={brace, mirror, amplitude=6pt}, thick]
([yshift=-0.55cm]hess.south west) -- ([yshift=-0.55cm]dict.south east)
node[midway, below=8pt] {\small Structural Extraction};

% Bottom brace for functional evaluation only
\draw[decorate, decoration={brace, mirror, amplitude=6pt}, thick]
([yshift=-0.55cm]comp.south west) -- ([yshift=-0.55cm]comp.south east)
node[midway, below=8pt] {\small Functional Extraction};

\end{tikzpicture}
\caption{Pipeline of the curvature-based extraction attack. The first two stages---projected Hessian collection and shared dictionary recovery---form the structural extraction procedure. The final stage is used only to evaluate the functional value of the recovered directions.}
\label{fig:attack-pipeline}
\end{figure}

\section{Curvature-Based Extraction Attack}
\label{sec:attack}

We now turn the shared-curvature structure established in Section~\ref{sec:curvature} into a black-box extraction procedure. The attack has two principal stages. It first estimates a collection of projected input Hessians from chosen-input oracle responses, and then jointly factorizes those Hessians to recover the shared first-layer direction set. An optional completion stage fits the remaining branch parameters while keeping the recovered directions fixed. The resulting pipeline is shown in Figure~\ref{fig:attack-pipeline}. The structural extraction claim concerns the first two stages. The completion stage is used for the functional extraction.

\subsection{Projected Hessian Collection}
\label{sec:projected-hessian-collection}
The first stage of the attack constructs multiple Hessian measurements whose coefficients vary across probes but whose normalized first-layer direction atoms remain shared.

\paragraph{Probe and projection selection.}
For each measurement index \(t\in\{1,\ldots,T\}\), the adversary selects a probe point $x_t\in\mathcal{X}\subseteq\mathbb{R}^{d}$ and an output projection $c_t\in\mathbb{R}^{k}$. We sample the probes and projections as: $x_t\sim\mathcal{N}(0,\sigma_x^2I_d)$ and $c_t\sim\mathcal{N}(0,I_k)$ respectively, where \(\sigma_x>0\) controls the probe scale. Samples are chosen so that all points required by the finite-difference stencil lie within the oracle query domain \(\mathcal{X}\). Now, for each pair \((x_t,c_t)\), define the scalar projected response as: $s_t(x) = c_t^\top g_\theta(x)$ and its input Hessian at \(x_t\): $H_t = \nabla_x^2s_t(x_t)$. Since the oracle returns the complete vector \(g_\theta(x)\), the scalar projection \(c_t^\top g_\theta(x)\) is computed locally and requires no additional oracle query. From Equation~\eqref{eq:normalized-hessian-mixture}, we know that each projected Hessian has the form $ H_t = \sum_{j=1}^{m} \gamma_{t,j}u_ju_j^\top$. Changing \(x_t\) or \(c_t\) changes the coefficient vector \(\gamma_{t,\cdot}\), while the direction atoms \(u_ju_j^\top\) remain shared across measurements. Multiple probe--projection pairs therefore produce different linear combinations of the same unknown structural dictionary.

\paragraph{Finite-difference estimation.}
Under the perturbed oracle (refer Definition~\ref{def:oracle}), we define the observed scalar response: $\widetilde s_t(x) = c_t^\top\mathcal{O}_{g}^{(\Delta,\tau)}(x)$. For a finite-difference step \(h>0\), the diagonal Hessian entries are estimated by:
\begin{equation}
    \label{eq:fd-diagonal}
    \widehat H_{t,ii} = \frac{ \widetilde s_t(x_t+he_i) - 2\widetilde s_t(x_t) + \widetilde s_t(x_t-he_i)}{h^2},
\end{equation}
where \(e_i\) is the \(i\)-th standard basis vector. For \(i\neq j\), the mixed entries are estimated by:
\begin{equation}
\label{eq:fd-offdiagonal}
    \widehat H_{t,ij} = \frac{\widetilde s_t(x_t+he_i+he_j) - \widetilde s_t(x_t+he_i-he_j) - \widetilde s_t(x_t-he_i+he_j) + \widetilde s_t(x_t-he_i-he_j)}{4h^2}.
\end{equation}
Symmetry is imposed by setting: $\widehat H_{t,ji} = \widehat H_{t,ij}$. The centered stencils cancel the constant and first-order Taylor terms. If
\(s_t\) is four times continuously differentiable throughout the stencil neighborhood, the resulting truncation error is \(O(h^2)\).

\paragraph{Perturbation amplification.}
For each oracle call, let $\varepsilon_t(x) = \widetilde s_t(x)-s_t(x)$  denote the induced scalar response error. Then using Definition~\ref{def:oracle} we have,
\begin{equation}
    \label{eq:scalar-oracle-error}
    |\varepsilon_t(x)| \leq \|c_t\|_1 \left(\tau+\frac{\Delta}{2} \right) =: \tau_{c_t}.
\end{equation}

\begin{proposition}[Finite-difference error amplification]
\label{prop:finite-difference-error}
Let \(M_4\) denote a uniform bound on the relevant fourth-order partial derivatives of \(s_t\) over the finite-difference stencil neighborhood, such that $\left|\partial_a\partial_b\partial_c\partial_d s_t(x) \right| \le M_4$, for all relevant index tuples \((a,b,c,d)\) and all \(x\) in the stencil neighborhood. Then each estimated Hessian entry satisfies:
$$
\left| \widehat H_{t,ij}-H_{t,ij} \right| \leq C_{ij}^{\mathrm{tr}}M_4h^2 + C_{ij}^{\mathrm{pert}} \frac{\tau_{c_t}}{h^2},
$$
where \(C_{ij}^{\mathrm{tr}}>0\) is a stencil-dependent truncation constant. For the perturbation term $C_{ii}^{\mathrm{pert}}=4$ for diagonal entries, while $C_{ij}^{\mathrm{pert}}=1$, where $i\neq j$, for mixed entries.
\end{proposition}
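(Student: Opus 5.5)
The plan is to split each estimated entry into a truncation part and a perturbation part. Both finite-difference formulas in Equations~\eqref{eq:fd-diagonal} and \eqref{eq:fd-offdiagonal} are linear in the sampled values. So for a stencil with weights $a_\ell$ and offsets $\delta_\ell$ we can write
$\widehat H_{t,ij} = \sum_\ell a_\ell\, \widetilde s_t(x_t+\delta_\ell) = \sum_\ell a_\ell\, s_t(x_t+\delta_\ell) + \sum_\ell a_\ell\, \varepsilon_t(x_t+\delta_\ell)$.
The first sum is the exact-oracle finite-difference estimate. The second sum collects the oracle errors. The triangle inequality then gives the two terms of the claimed bound, and each is handled separately.

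\emph{Truncation term.} I would Taylor expand $s_t(x_t+\delta_\ell)$ to third order, with a fourth-order remainder. The remainder can be taken in Lagrange form along the segment, or in integral form to avoid mean-value subtleties for a multivariate function. This needs $s_t\in C^4$ on the stencil neighborhood, which holds for GELU/SiLU FFNs.

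For the diagonal stencil, with offsets $\pm he_i$ and weights $1,-2,1$ over $h^2$, the constant terms cancel. The odd-order terms cancel by the symmetry $\delta\mapsto-\delta$. The second-order terms give exactly $\partial_{ii}s_t(x_t)$.

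For the mixed stencil, with offsets $\pm he_i\pm he_j$ and weights $\pm1/(4h^2)$, the constant and odd terms cancel by the same symmetry. The quadratic terms leave only $\partial_{ij}s_t$, because the $e_i e_i$ and $e_j e_j$ contributions cancel between the $+$ and $-$ sign patterns.

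The remainder is a combination of fourth-order partials evaluated inside the stencil neighborhood. Each is multiplied by $\|\delta_\ell\|^4/4!$ times a multinomial count. Bounding every partial by $M_4$ gives $C^{\mathrm{tr}}_{ij}M_4h^2$, where for instance $C^{\mathrm{tr}}_{ii}=1/12$ and $C^{\mathrm{tr}}_{ij}$ is a small explicit constant. Exact constants are not needed, only their stencil dependence.

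\emph{Perturbation term.} Equation~\eqref{eq:scalar-oracle-error} gives $|\varepsilon_t(x)|\le\tau_{c_t}$ at every query. This follows from Hölder's inequality, $|c^\top z|\le\|c\|_1\|z\|_\infty$, applied to $z=Q_\Delta(g+\xi)-g$, whose sup norm is at most $\tau+\Delta/2$. Then the error sum is bounded by $\tau_{c_t}\sum_\ell|a_\ell|$.

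For the diagonal stencil, $\sum_\ell|a_\ell| = (1+2+1)/h^2 = 4/h^2$. For the mixed stencil, $\sum_\ell|a_\ell| = 4\cdot\tfrac{1}{4h^2} = 1/h^2$. These give $C^{\mathrm{pert}}_{ii}=4$ and $C^{\mathrm{pert}}_{ij}=1$. Symmetrization $\widehat H_{t,ji}=\widehat H_{t,ij}$ does not change the bound.

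\emph{Main obstacle.} The perturbation part is routine. The step needing the most care is the truncation constant for the mixed stencil: one must verify that the quadratic Taylor terms leave exactly $\partial_{ij}s_t$ with coefficient one, and that the third-order terms vanish. The only real bookkeeping is the remainder, whose mixed fourth-order partials must stay bounded by $M_4$ along each segment from $x_t$ to $x_t+\delta_\ell$. I would verify the cancellations by expanding $s_t(x_t+\delta)+s_t(x_t-\delta)$ once for a generic $\delta$. This even sum removes all odd-order terms at once and reduces both stencils to a second-order term plus an $O(\|\delta\|^4)$ remainder.
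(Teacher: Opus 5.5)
Your proposal is correct and takes essentially the same route as the paper's proof sketch. By linearity you split each entry into an exact-stencil part, bounded through a fourth-order Taylor expansion, and an oracle-error part, bounded by $\tau_{c_t}$ times the sum of absolute stencil weights ($4/h^2$ for diagonal entries, $1/h^2$ for mixed ones). You add detail the paper leaves implicit (the H\"older derivation of $\tau_{c_t}$, the even-sum cancellation of odd-order terms, and explicit constants such as $C^{\mathrm{tr}}_{ii}=1/12$), but the argument is the same.
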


\begin{proof}[Proof sketch]
A Taylor expansion through fourth order shows that the centered finite-difference stencils have truncation error of order \(M_4h^2\) after division by \(h^2\). For a diagonal entry, the numerator coefficients of the stencil are \(1,-2,1\). Their absolute values sum to \(4\), so perturbing each scalar response by at most \(\tau_{c_t}\) contributes at most $\frac{4\tau_{c_t}}{h^2}$. For a mixed entry, the four scalar responses have coefficients of magnitude \(1/(4h^2)\). The sum of their absolute values is therefore \(1/h^2\), giving a perturbation contribution bounded by $\frac{\tau_{c_t}}{h^2}$. Combining the truncation and perturbation terms gives the stated bound.
\end{proof}

The resulting entry wise error tradeoff is given by:
\begin{equation}
    \label{eq:fd-error-tradeoff}
    O(M_4h^2) + O\!\left(\frac{\tau_{c_t}}{h^2} \right).
\end{equation}
Thus, decreasing \(h\) reduces truncation error but amplifies oracle perturbations, whereas increasing \(h\) suppresses perturbation amplification at the cost of larger truncation error. This tradeoff motivates the adaptive finite-difference experiments in Section~\ref{sec:adaptive-robustness}.

\paragraph{Query complexity - a schoolbook example.}
A complete Hessian estimate uses one query at \(x_t\), two queries for each diagonal entry, and four queries for each unordered off-diagonal pair. Hence, we have:
\begin{equation}
    \label{eq:queries-per-hessian}
    \begin{split}
    Q_{\mathrm{Hess}}(d) &= 1+2d+4\binom{d}{2}\\
    &=  2d^2+1.
    \end{split}
\end{equation}
Collecting \(T\) projected Hessians requires: $Q_{\mathrm{curv}}(T,d) = T(2d^2+1)$ vector-output oracle queries. For \(d=64\), one complete Hessian requires \(8{,}193\) queries. Thus, for $T=8,\;16,\;32$ requires $ 65{,}544, \;131{,}088, \;262{,}176$ structural queries respectively.
\paragraph{Attack optimization.}
As we observed above, the oracle returns the complete $k$-dimensional vector $g_{\theta}(x)$ and therefore  $c^\top g_{\theta}(x)$ can be computed locally and requires no additional oracle query. But this means that the same finite-difference stencil at one probe point gives the Hessian of every output coordinate simultaneously. At a fixed probe point $x_p$,
\begin{equation}
\nabla^2 g_r(x_p) =
\sum_j W_{2,rj}\phi''(w_j^\top x_p+b_j)w_jw_j^\top ,
\qquad r=1,\ldots,k.
\end{equation}

After the $(2d^2+1)$ vector-valued oracle calls have been made, arbitrary projected Hessians
\begin{equation} 
\nabla^2(c^\top g)(x_p) =
\sum_r c_r\nabla^2g_r(x_p)
\end{equation} 
can be produced offline at \textbf{no additional query cost}. Thus, the optimized attack query cost for $P$ distinct probe locations is 
\begin{equation} P(2d^2+1),
\end{equation}
while many output projections can be obtained at each $x_p$. This represents a substantial increase of the power of the attack over the estimates in the previous paragraph. All curvature-extraction experiments therefore use this optimized shared-stencil attack, with structural queries counted as \(P(2d^2+1)\). 

Note that $P=1$ is exceptionally query-efficient when the probe activates sufficient curvature across units, while additional probes provide a robustness mechanism when some units have weak curvature or the output dimension/rank is insufficient. For example, for \(d=64\), the single-probe operating point requires \(8{,}193\) queries for the evaluated \(P=1\) configuration. Additional probes may be used to obtain more complete and robust recovery when some units have weak curvature or the output dimension/rank is insufficient. At a fixed probe, unit $j$ contributes through
\begin{equation} 
\beta_j =\phi''(w_j^\top x_p+b_j)\|w_j\|^2.
\end{equation} 
If $\beta_j=0$, that direction is invisible in every output projection at that probe. If it is very small, it can be poorly conditioned. Similarly, projection diversity at one probe is limited by  $\operatorname{rank}(W_2 D)\le k$, where $D(x_p)=\operatorname{diag}(\beta_1,\ldots,\beta_m)$. Thus, arbitrarily increasing $T$ beyond the available output rank cannot create new independent information.

\paragraph{Privileged diagnostic modes.}
For diagnostic comparison, we also construct Hessians from the analytic expression in Lemma~\ref{lem:hessian-mixture} and through automatic differentiation. Both modes require access to target-model parameters or derivatives and therefore do not constitute black-box extraction. The finite-difference mode is the principal oracle-based setting studied in this work.

\subsection{Shared Rank-One Dictionary Recovery}
\label{sec:dictionary-recovery}
Let $\widehat{\mathcal H} = \{\widehat H_t\}_{t=1}^{T}$ denote the projected Hessians estimated from oracle responses. $\mathcal{A}$
seeks a shared collection of normalized first-layer directions. For optimization, these directions are stacked as the columns: $U = [u_1,\ldots,u_m] \in \mathbb{R}^{d\times m}$, such that $\|u_j\|_2=1$.

\paragraph{Hessian normalization.}
The magnitudes of the projected Hessians vary with the probe points and output projections. We therefore compute the global root-mean-square scale:
\begin{equation}
\label{eq:hessian-rms}
    s_H = \left( \frac{1}{Td^2} \sum_{t=1}^{T} \|\widehat H_t\|_F^2 \right)^{1/2}
\end{equation}
and normalize each measurement as: $\widetilde H_t = \frac{\widehat H_t}{s_H}$. This rescaling preserves the relative magnitudes of the Hessians and
does not alter their shared direction atoms. We also introduce measurement-specific coefficients: $A = [a_{t,j}] \in \mathbb{R}^{T\times m}$, and seek a joint factorization of the form: $\widetilde H_t \approx \sum_{j=1}^{m} a_{t,j}u_ju_j^\top$. The coefficients absorb both the measurement-dependent curvature weights and the common normalization factor \(s_H\).

\paragraph{Dictionary objective.}
The normalized Hessians are jointly factorized by minimizing:
\begin{equation}
\label{eq:dictionary-objective}
    \begin{split}
        \mathcal{L}_{\mathrm{dict}}(U,A)={}& \frac{1}{Td^2} \sum_{t=1}^{T} \left\| \widetilde H_t -\sum_{j=1}^{m} a_{t,j}u_ju_j^\top \right\|_F^2 + \lambda_A \frac{\|A\|_F^2}{Tm},
    \end{split}
\end{equation}
subject to $\|u_j\|_2=1$, for $j=1,\ldots,m$. The coefficient penalty controls the magnitude of \(A\), while the unit-norm constraints remove the continuous scaling ambiguity between each direction and its measurement-specific coefficients.

\paragraph{Multi-restart joint optimization.}
The dictionary objective is nonconvex and may contain multiple local minima. We therefore optimize it from \(R\) independent random initializations. At each restart, the columns of \(U\) are sampled randomly and normalized to unit length. Conditional on the initial dictionary, each coefficient row \(a_{t,\cdot}\) is initialized by ridge regression of \(\operatorname{vec}(\widetilde H_t)\) on the vectorized rank-one atoms: $\left\{\operatorname{vec}(u_ju_j^\top) \right\}_{j=1}^{m}$. The variables \(U\) and \(A\) are then optimized jointly using Adam, with every column of \(U\) renormalized after each update. Restart selection uses only the unregularized Hessian reconstruction error
\begin{equation}
\label{eq:dictionary-reconstruction-loss}
    \mathcal{L}_{\mathrm{rec}}(U,A) = \frac{1}{Td^2} \sum_{t=1}^{T} \left\| \widetilde H_t - \sum_{j=1}^{m} a_{t,j}u_ju_j^\top \right\|_F^2.
\end{equation}
The restart attaining the smallest value of \(\mathcal{L}_{\mathrm{rec}}\) is retained. No target-model parameters, ground-truth directions, hidden activations, or structural evaluation scores are used for restart selection. The structural output of the attack is given by: $ \widehat U = \{\widehat u_1,\ldots,\widehat u_m\}$. Algorithm~\ref{alg:curvature-extraction} summarizes the complete structural extraction procedure. The algorithm below assumes that the complete finite-difference stencil remains within the query domain \(\mathcal X\). In the primary representation-space setting, \(\mathcal X=\mathbb{R}^{d}\); for a restricted domain, invalid probe points must be rejected or adjusted. The recovered set \(\widehat U\) is obtained solely from oracle-derived curvature measurements, the known architectural dimensions, and public optimization settings. It requires neither natural-data completion queries nor access to target-model parameters, hidden activations, or ground-truth directions. Thus, the structural extraction stage ends with a direct black-box estimate of the first-layer direction dictionary, independently of whether a functional surrogate is subsequently constructed.

\begin{algorithm}[tb]
\caption{Curvature Extraction of a Smooth Transformer FFN}
\label{alg:curvature-extraction}
\begin{algorithmic}[1]
\Require Chosen-input raw-output oracle
\(\mathcal{O}_{g}\);
query domain \(\mathcal X\);
dimensions \(d,k,m\);
number of probe points \(P\);
number of projections  \(T_p\) at probe $p$;
number of Hessians $T=\sum_{p=1}^P T_p$ ;
probe scale \(\sigma_x\);
finite-difference step \(h\);
restarts \(R\);
optimization steps \(S\)
\Ensure Recovered normalized directions
\(\widehat U=\{\widehat u_j\}_{j=1}^{m}\)

\State Initialize
\(\widehat{\mathcal H}\leftarrow\emptyset\)

\For{\(p=1,\ldots,P\)}
    \State Sample $x_p\sim\mathcal{N}(0,\sigma_x^2I_d)$  such that the complete finite-difference stencil lies in \(\mathcal X\)

    \State Query each point in the complete vector-valued finite-difference stencil around $x_p$ once, for a total of $2d^2+1$ oracle calls
    \State From those vector responses, estimate $\{\widehat{\nabla^2 g_r(x_p)\}_{r=1}^k}$
    \For{$t=1,\ldots, T_p$}
    \State Sample $c_{p,t}\sim \mathcal{N}(0,I_k)$
    \State Construct offline $\widehat H_{p,t} \gets \sum_{r=1}^k c_{p,t,r}\widehat{\nabla^2 g_r(x_p)}$ 

    \State Append $\widehat H_{p,t}$ to $\widehat{\mathcal H}$
\EndFor
\EndFor
\State Reindex the collected $\{ \widehat H_{p,t}\}$ as $\{ \widehat H_{t}\}_{t=1}^T$, where $T=\sum_{p=1}^P T_p$ 
\State Compute \(s_H\) using Equation~\eqref{eq:hessian-rms}
\For{$t=1,\ldots, T$}
\State Set $\widetilde H_t \leftarrow \widehat H_t/s_H$
\EndFor
\For{\(r=1,\ldots,R\)}
    \State Randomly initialize and column-normalize
    \(U^{(r)}\in\mathbb{R}^{d\times m}\)

    \State Initialize each row of
    \(A^{(r)}\in\mathbb{R}^{T\times m}\)
    by ridge regression

    \State Jointly optimize
    \(U^{(r)}\) and \(A^{(r)}\) for \(S\) steps using
    \(\mathcal{L}_{\mathrm{dict}}\)

    \State Renormalize every column of \(U^{(r)}\) after each update

    \State Record $\mathcal{L}_{\mathrm{rec}} \bigl(U^{(r)},A^{(r)}\bigr)$
\EndFor

\State Select \begin{equation*}r^*\gets\argminA_{r=1,\ldots, R} \mathcal{L}_{\mathrm{rec}} \bigl(U^{(r)},A^{(r)}\bigr)\end{equation*} 
\State Set  $\widehat U\gets U^{(r^*)}$
\State \Return $\widehat U$

\end{algorithmic}
\end{algorithm}

\subsection{Surrogate Completion and Full-Model Replacement}
\label{sec:surrogate-completion}
Structural recovery identifies the normalized rows of the first linear map, but does not determine their magnitudes, orientations, biases, or the second-layer parameters. To evaluate whether the recovered directions support functional replacement, we construct a surrogate FFN whose first-layer directions are fixed to \(\widehat U\).

\paragraph{Fixed-direction parameterization.}
Let $ \widehat U = [\widehat u_1,\ldots,\widehat u_m] \in \mathbb{R}^{d\times m}$. Then we parameterize the first-layer matrix as: $\widehat W_1 = \operatorname{diag}(s_1,\ldots,s_m)\widehat U^\top$, where $s_j\in\mathbb{R}$. The signed scales absorb the orientation ambiguity left by curvature recovery: replacing \(\widehat u_j\) by \(-\widehat u_j\) can be compensated by replacing \(s_j\) by \(-s_j\). The underlying unoriented direction set therefore remains unchanged. The completed branch is: $ \widehat g(x) = \widehat W_2 \phi(\widehat W_1x+\widehat b_1) + \widehat b_2$. During completion, \(\widehat U\) remains fixed, while the adversary optimizes: $ s, \;\widehat b_1, \;\widehat W_2, \; \widehat b_2$.

\paragraph{Completion objective.}
Let $\left\{ x_n^{\mathrm{nat}} \right\}_{n=1}^{N_{\mathrm{fit}}}$ be naturally occurring FFN input representations, and let $y_n = \mathcal{O}_g \left( x_n^{\mathrm{nat}} \right)$ be the corresponding branch outputs. The completion parameters are fitted by minimizing
\begin{equation}
\label{eq:surrogate-fit}
    \mathcal{L}_{\mathrm{fit}} = \frac{1}{N_{\mathrm{fit}}} \sum_{n=1}^{N_{\mathrm{fit}}} \left\| \widehat g\left( x_n^{\mathrm{nat}} \right) - y_n \right\|_2^2.
\end{equation}
The optimization is performed using AdamW while keeping the recovered direction matrix fixed. Under the stated block-level oracle model, each branch output \(y_n\) requires one additional oracle query. The completion query budget is therefore: $Q_{\mathrm{fit}} = N_{\mathrm{fit}}$. The total query cost of structural extraction followed by optional completion is: \( Q_{\mathrm{total}} = P(2d^2+1) + N_{\mathrm{fit}}\). We report the structural and completion budgets separately because only the first term is required to recover the hidden direction set. Functional completion is a separate empirical fitting stage built on the recovered direction dictionary. The local identifiability result does not guarantee recovery of the signed row scales, biases, or second-layer parameters, nor does it guarantee global functional equivalence. The completion experiment instead tests whether the structurally recovered directions provide a sufficiently accurate basis for replacing the target FFN in its original classifier. Algorithm~\ref{alg:surrogate-completion} summarizes the completion stage. This represents a controlled downstream-utility experiment under additional access assumptions.

\begin{algorithm}[tb]
\caption{Fixed-Direction Surrogate Completion}
\label{alg:surrogate-completion}
\begin{algorithmic}[1]
\Require Recovered directions
\(\widehat U\);
natural branch inputs
\(\{x_n^{\mathrm{nat}}\}_{n=1}^{N_{\mathrm{fit}}}\);
oracle \(\mathcal{O}_g\);
activation \(\phi\)
\Ensure Completed surrogate branch \(\widehat g\)

\For{\(n=1,\ldots,N_{\mathrm{fit}}\)}
    \State $y_n \leftarrow \mathcal{O}_g \left( x_n^{\mathrm{nat}}
    \right)$
\EndFor

\State Parameterize $\widehat W_1 \gets \operatorname{diag}(s_1,\ldots,s_m) \widehat U^\top$, such that $s_j\in\mathbb{R}$

\State Initialize $ s,\; \widehat b_1, \; \widehat W_2, \; \widehat b_2$

\State Jointly minimize: $ \frac{1}{N_{\mathrm{fit}}} \sum_{n=1}^{N_{\mathrm{fit}}} \left\| \widehat W_2 \phi \left( \operatorname{diag}(s) \widehat U^\top x_n^{\mathrm{nat}} + \widehat b_1
\right) + \widehat b_2 - y_n \right\|_2^2$, using AdamW while keeping \(\widehat U\) fixed

\State Set $ \widehat g(x) \gets \widehat W_2 \phi \left( \operatorname{diag}(\widehat s) \widehat U^\top x + \widehat b_1 \right) + \widehat b_2$
\State \Return $ \widehat g(x)$
\end{algorithmic}
\end{algorithm}

\paragraph{Full-model replacement.}
To evaluate downstream functional fidelity, we insert \(\widehat g\) into a copy of the target classifier while leaving all remaining model parameters unchanged. Let \(M\) denote the original classifier and let \(\widehat M\) denote the classifier containing the completed surrogate branch. On held-out inputs, we evaluate target and surrogate accuracy, top-1 predictive agreement, KL divergence, and centered-logit mean-squared error. For privileged scientific evaluation, we additionally compare \(\widehat U\) with the true normalized rows of \(W_1\), compute CKA between target and surrogate FFN representations, and measure matched hidden-feature correlation. These quantities require access to target-model parameters or internal activations and are not available to the adversary during extraction.

The attack therefore yields two distinct outcomes. Its primary output is a black-box reconstruction of the hidden first-layer direction dictionary. Its optional completion stage asks whether those extracted directions are sufficient to support an accurate functional replacement of the target branch. The experiments that follow evaluate these structural and functional claims separately, ensuring that downstream surrogate performance does not substitute for direct evidence of parameter-structure recovery.

\section{Experiments}
\label{sec:experiments}
We evaluate the proposed extraction procedure on independently trained transformer FFNs with GELU and SiLU activations. The experiments address four questions: whether the first-layer directions can be recovered under finite-difference black-box access, whether recovery is reproducible across models and extraction runs, whether the recovered structure supports functional replacement, and whether simple output perturbations remain effective against an adaptive choice of finite-difference scale.

\paragraph{Target models and data.}
We train vision transformers on CIFAR-10~\cite{krizhevsky2009learning}. Each \(32\times32\) image is partitioned into non-overlapping \(4\times4\) patches, producing \(64\) patch tokens and one classification token. Each model contains four transformer blocks, four attention heads, embedding dimension \(d=64\), and FFN hidden width \(m=128\). The FFN activation is either GELU or SiLU. Unless stated otherwise, extraction targets the FFN in the final transformer block. We use \(45{,}000\) training examples and \(5{,}000\) validation examples from the official training set. The \(10{,}000\)-example test set is used only after checkpoint selection. For each activation, we train three independently initialized target models using seeds \(\{0,1,2\}\). Training uses AdamW~\cite{loshchilov2017decoupled} with \(\mathrm{cosine}\) learning-rate decay for \(150\) epochs, batch size \(128\), initial learning rate \(3\times10^{-4}\), weight decay \(5\times10^{-2}\), and dropout \(0.05\). Table~\ref{tab:target-training} reports the validation-selected checkpoints. All experiments were conducted on a workstation equipped with an Intel Core Ultra 9 CPU, $64$\,GB of RAM, and an NVIDIA GeForce RTX 4070 GPU with \(8\)\,GB of GDDR6 memory. The implementation used PyTorch 2.5.1~\cite{paszke2019pytorch} with CUDA 12.4.

\begin{table}[t]
    \centering
    \caption{Validation-selected CIFAR-10 target models. The test set is not
    used for checkpoint selection.}
    \label{tab:target-training}
    \begin{tabular}{lcccc}
        \toprule
        Activation & Seed & Best epoch & Val. acc. (\%) & Test acc. (\%) \\
        \midrule
        GELU & 0 & 143 & 77.78 & 79.18 \\
        GELU & 1 & 129 & 79.06 & 78.83 \\
        GELU & 2 & 120 & 79.32 & 79.54 \\
        \midrule
        SiLU & 0 & 149 & 77.46 & 78.16 \\
        SiLU & 1 & 129 & 77.34 & 77.71 \\
        SiLU & 2 & 138 & 78.10 & 78.68 \\
        \bottomrule
    \end{tabular}
\end{table}

\paragraph{Extraction protocol.}
All extraction experiments query the post-LayerNorm FFN branch in evaluation mode, with dropout disabled. Projected Hessians are estimated using the centered finite-difference procedure and the optimized attack of Section~\ref{sec:projected-hessian-collection}; for \(d=64\), each distinct probe stencil requires \(8{,}193\) vector-output oracle queries. Unless stated otherwise, dictionary recovery uses three random restarts with joint Adam optimization, and the selected restart is determined solely by Hessian reconstruction loss. Ground-truth directions are used only for evaluation.

\paragraph{Default operating point.}
We evaluate $T\in\{4,8,16,32, 64\}$ projected Hessians at a single probe location. Recovery increases sharply between \(T=8\) and \(T=16\) and changes only marginally at \(T=32\) and \(T=64\). We therefore use $T=16$. Under the optimized vector-output stencil (cf., \textbf{Attack optimization } in section~\ref{sec:projected-hessian-collection}), all \(T\) projected Hessians are constructed offline from the same oracle responses. Thus, for \(d=64\) and \(P=1\), the default structural-extraction budget is: $Q_{\mathrm{struct}} = P(2d^2+1) = 1\cdot(2\cdot64^2+1) = 8{,}193$. We similarly vary the finite-difference step \(h\). Recovery is poor for \(h\leq3\times10^{-3}\), reaches the stable high-recovery regime at \(h=10^{-2}\), and shows no consistent improvement at larger tested values. We therefore use $h=10^{-2}$ for clean-oracle experiments unless stated otherwise. For optional functional completion, we use $N_{\mathrm{fit}} = Q_{\mathrm{fit}} = 12{,}000$ natural branch input--output pairs. Agreement improves rapidly at lower budgets and changes only marginally when the completion budget is increased to \(24{,}000\). The remaining FFN parameters are optimized for \(1{,}500\) AdamW steps over this fixed completion set, requiring no additional oracle queries. The combined budget is therefore:
\begin{equation}
    \begin{split}
        Q_{\mathrm{total}} &= Q_{\mathrm{struct}}+Q_{\mathrm{fit}}\\
        &= 8{,}193 + 12{,}000\\
        &= 20{,}193.
    \end{split}
\end{equation}
Structural and completion queries are reported separately because completion is not required to recover the first-layer direction dictionary.

\begin{figure}[t]
     \centering
     \begin{subfigure}[b]{0.48\textwidth}
         \centering
         \includegraphics[width=\textwidth]
         {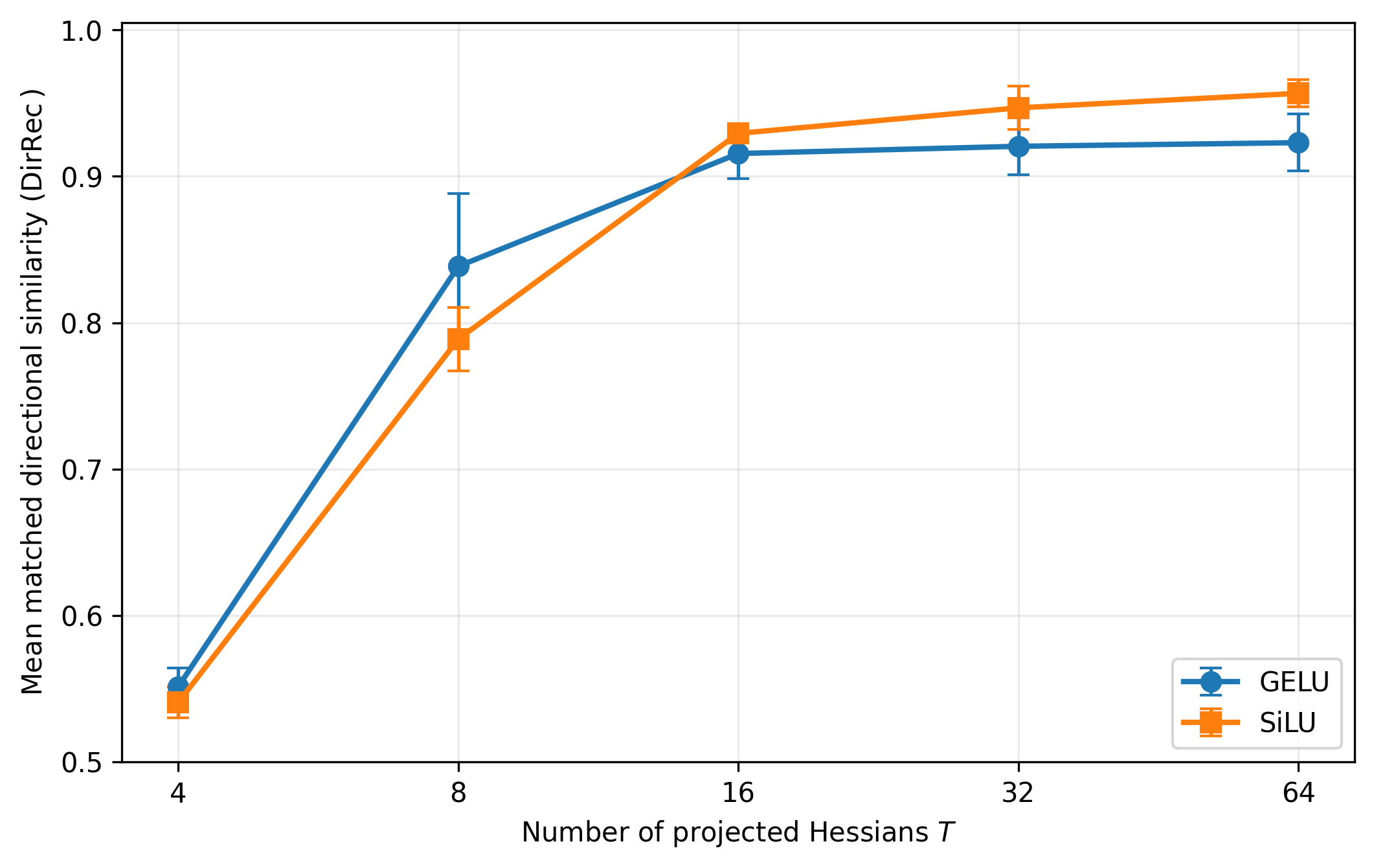}
         \caption{Recovery improves sharply through \(T=16\) and changes only marginally for \(T\geq32\).}
         \label{fig:exp-query-and-projected-hessian-ablation}
     \end{subfigure}
     \hfill
     \begin{subfigure}[b]{0.48\textwidth}
         \centering
         \includegraphics[width=\textwidth]
         {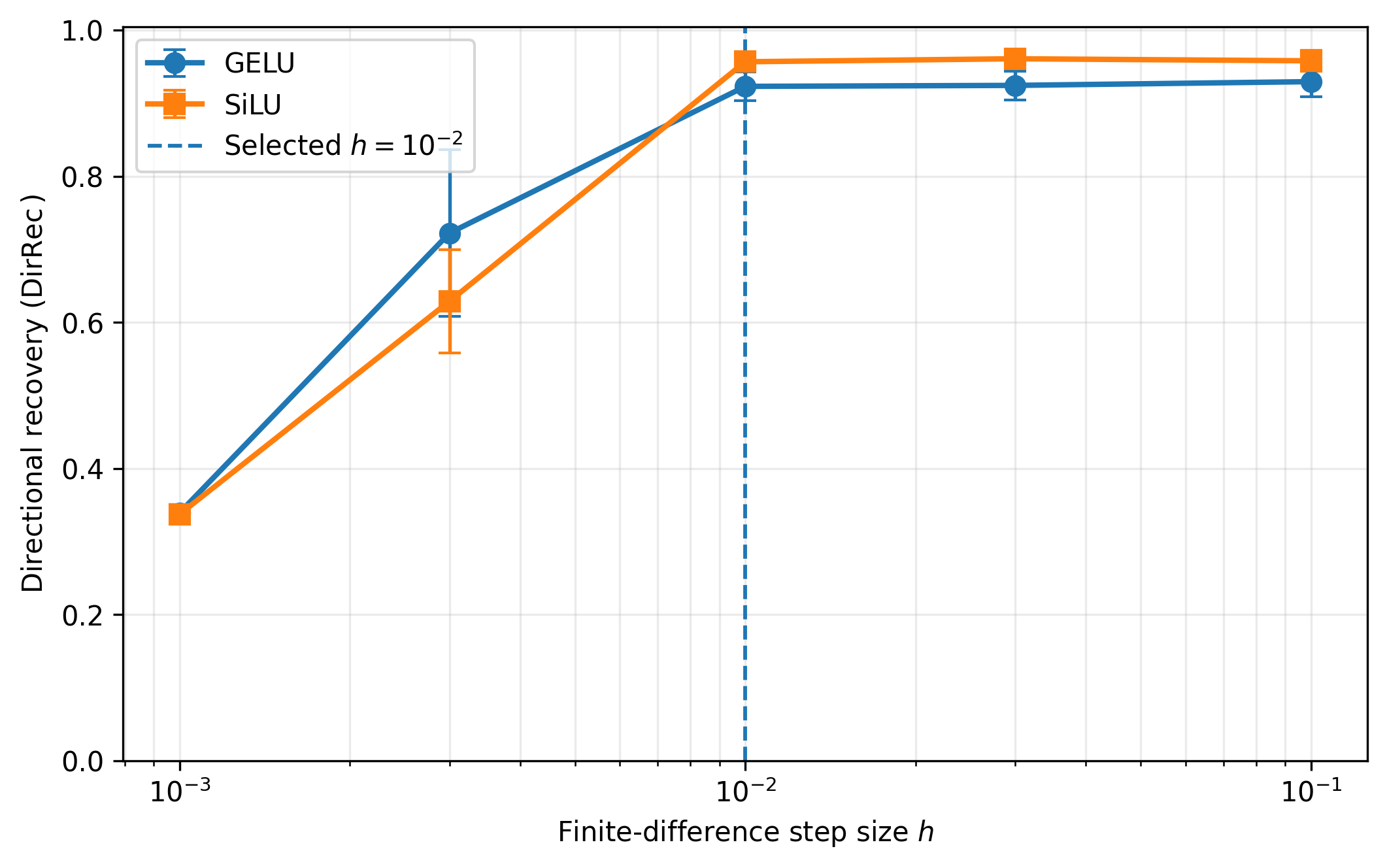}
         \caption{\(h=10^{-2}\) is the smallest tested step in the stable high-recovery regime.}
         \label{fig:exp-finite-difference-ablation}
     \end{subfigure}
     \caption{Selection of the optimized structural-extraction operating point. Increasing the number of projected Hessians improves recovery, with most of the gain attained by \(T=16\), while finite-difference recovery stabilizes at \(h=10^{-2}\). All experiments use one shared probe stencil and therefore require \(8{,}193\) structural vector-output oracle queries. Points and error bars report the mean and sample standard deviation across \(3\) independent attack seeds.}
     \label{fig:exp-query-and-finitedifference-ablation}
\end{figure}

\begin{figure}[t]
     \centering
     \begin{subfigure}[b]{0.48\textwidth}
         \centering
         \includegraphics[width=\textwidth]
         {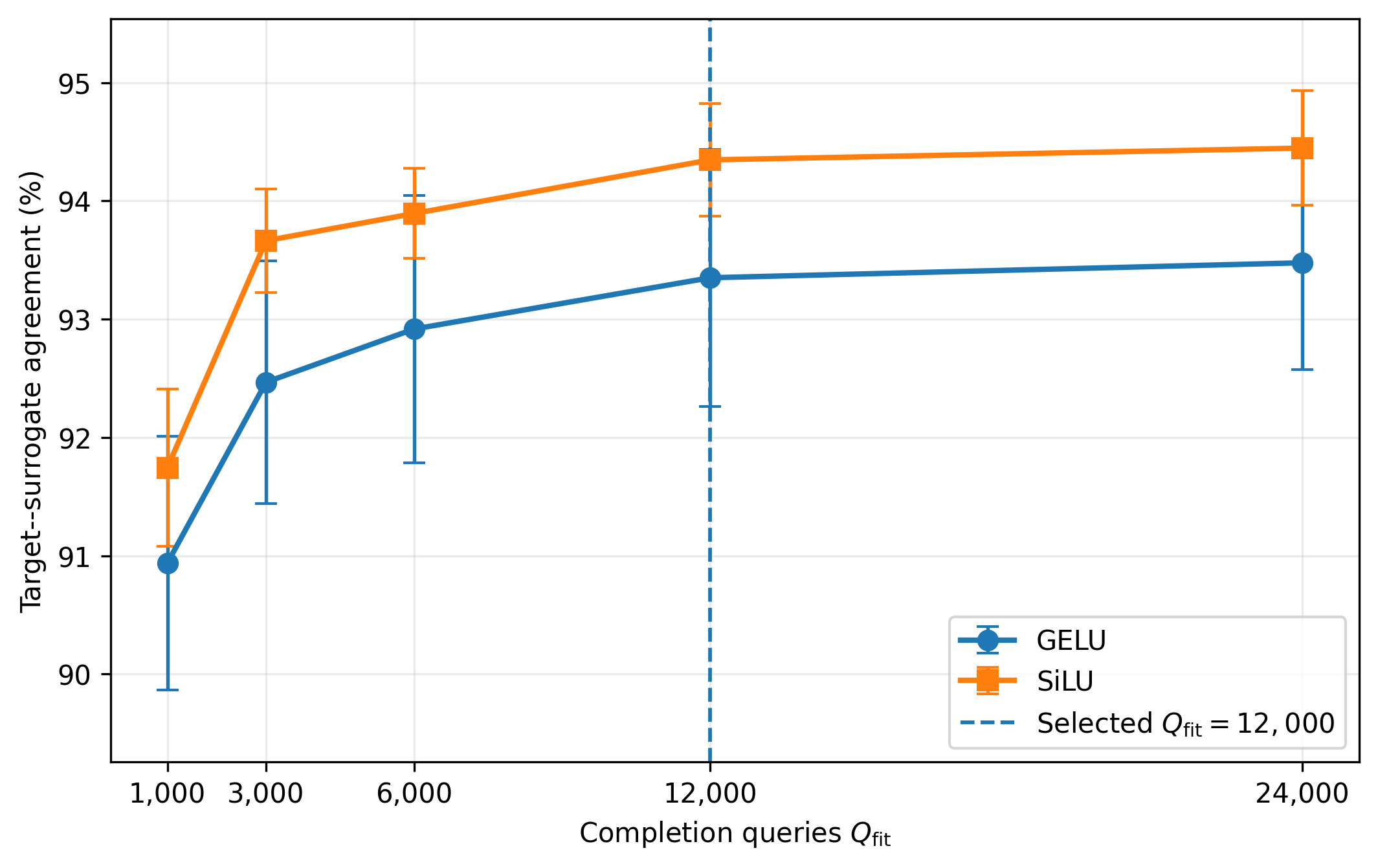}
         \caption{Agreement rises rapidly and approaches saturation by \(Q_{\mathrm{fit}}=12{,}000\).}
         \label{fig:exp-completion-query-ablation-surrogate}
     \end{subfigure}
     \hfill
     \begin{subfigure}[b]{0.48\textwidth}
         \centering
         \includegraphics[width=\textwidth]
         {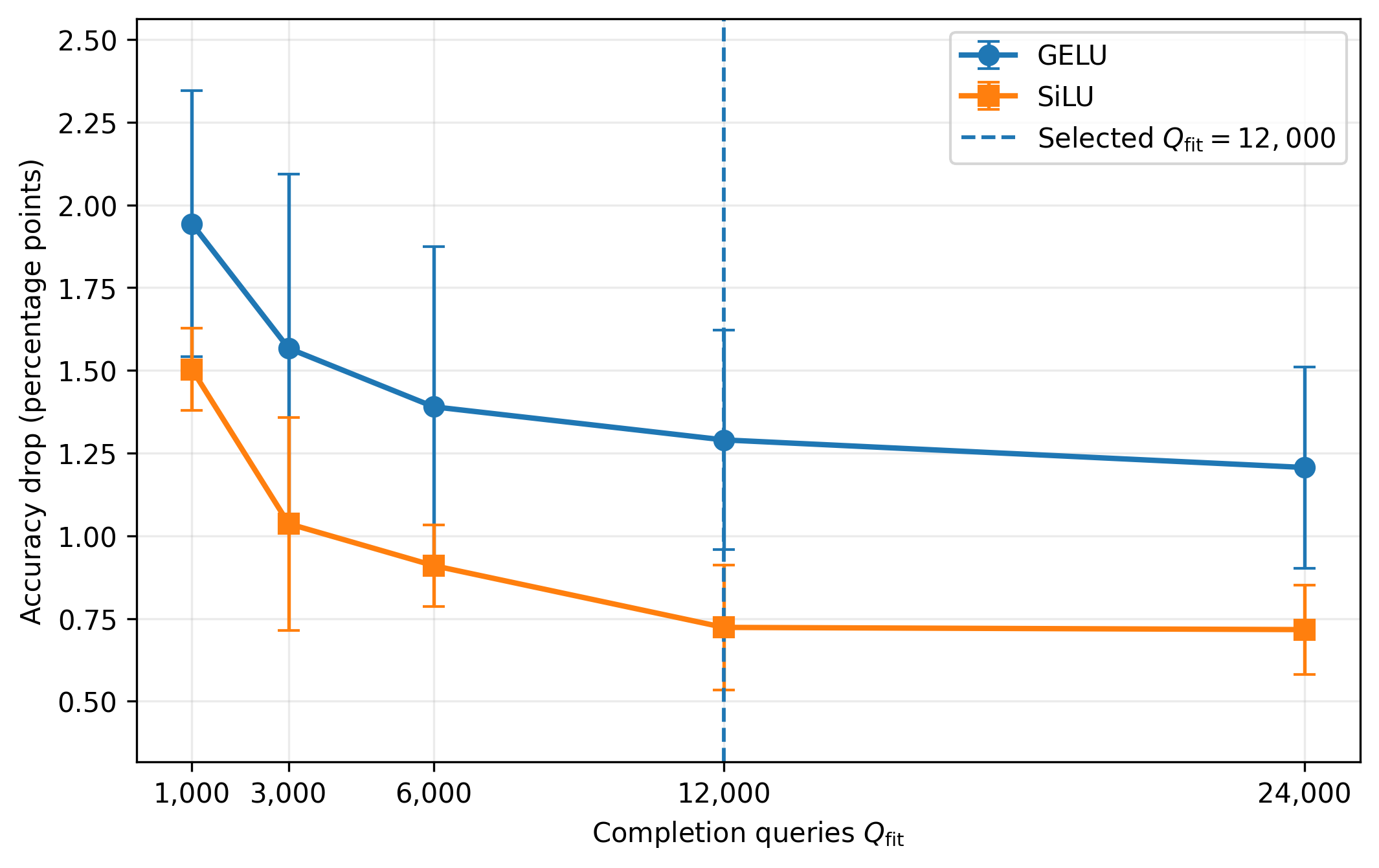}
         \caption{Completion sharply reduces the surrogate accuracy gap, with near-best preservation at \(Q_{\mathrm{fit}}=12{,}000\).}
         \label{fig:exp-completion-query-ablation-accuracy}
     \end{subfigure}
     \caption{Completion-query ablation for functional replacement. Increasing \(Q_{\mathrm{fit}}\) substantially improves target--surrogate agreement and accuracy preservation, with performance effectively saturating by \(12{,}000\) queries. Points show means over three independent extraction seeds; error bars denote one sample standard deviation.}
     \label{fig:exp-completion-query-ablation}
\end{figure}

\paragraph{Structural recovery metrics.}
Structural recovery is evaluated using the permutation- and sign-invariant metrics from Definition~\ref{def:direction-recovery}. Our primary metric is \(\operatorname{DirRec}\), the mean absolute cosine alignment after optimal one-to-one matching between the recovered and true first-layer directions. We also report \(\operatorname{Hit}_{0.90}\), the fraction of matched directions whose absolute cosine similarity is at least \(0.90\), together with the matched median and minimum alignment. As stated earlier, the true first-layer directions are used only for scientific evaluation and are not available to the adversary during extraction. We additionally report the final Hessian reconstruction loss as an optimization diagnostic. A low reconstruction loss alone does not establish structural recovery because distinct or poorly conditioned factorizations may reproduce the observed Hessians without accurately identifying the underlying direction atoms.

\paragraph{Functional replacement metrics.}
Functional replacement is evaluated only after structural recovery has been established. During completion, the recovered directions remain fixed while signed first-layer scales, first-layer biases, second-layer weights, and second-layer biases are fitted from additional branch input--output pairs.

The completed FFN is inserted into the original classifier while all remaining parameters are held fixed. We report target and surrogate test accuracy, accuracy drop, top-1 prediction agreement, KL divergence, and centered-logit mean-squared error. For privileged scientific evaluation, we also report FFN representation CKA and matched hidden-feature correlation. These metrics measure the downstream usefulness of the recovered structure, but they do not replace direct structural evaluation. Predictive agreement may be achieved without identifying the target FFN's internal directions, whereas high \(\operatorname{DirRec}\) provides direct evidence that the first-layer geometry is exposed through the black-box curvature channel.

\subsection{Structural Recovery}
\label{sec:structural-recovery}
We first evaluate whether black-box curvature measurements reveal the normalized input directions of the first linear map \(W_1\) in the targeted FFN branch. This is the primary extraction criterion: unlike behavioral imitation, high directional recovery provides direct evidence that the target model's internal first-layer geometry is exposed. We examine recovery across independently trained target models, independent realizations of the extraction procedure, offline projection budgets, and transformer depth. Privileged exact-Hessian and initialization ablations then separate finite-difference measurement error from limitations of the nonconvex dictionary optimization.

\paragraph{Recovery across model and extraction randomness.}
Table~\ref{tab:independent-models} reports structural recovery and downstream functional replacement across three independently trained target models for each activation. The extraction seed is held fixed across target models so that the reported variation primarily reflects model-training randomness. Each extraction nevertheless uses three independent dictionary restarts, with the restart attaining the lowest observed Hessian reconstruction loss retained.

\begin{table}[t]
    \centering
    \caption{Structural recovery and functional replacement across independently trained target models. Values are mean \(\pm\) sample standard deviation over three training seeds. Each extraction uses \(T=16\), corresponding to 8,193 structural oracle queries, followed by 12,000 completion queries.}
    \label{tab:independent-models}
    \begin{tabular}{lcc}
        \toprule
        Metric & GELU & SiLU \\
        \midrule
        \(\operatorname{DirRec}\) & \(0.9187\pm0.0138\) & \(0.9402\pm0.0119\) \\
        \(\operatorname{Hit}_{0.90}\) & \(0.8672\pm0.0234\) & \(0.9115\pm0.0325\) \\
        Target accuracy (\%) & \(79.18\pm0.36\) & \(78.18\pm0.49\) \\
        Surrogate accuracy (\%) & \(77.94\pm0.47\) & \(77.61\pm0.49\) \\
        Accuracy drop (pp) & \(1.24\pm0.23\) & \(0.57\pm0.09\) \\
        Top-1 agreement (\%) & \(92.81\pm0.22\) & \(94.97\pm0.36\) \\
        KL divergence & \(0.0432\pm0.0030\) & \(0.0209\pm0.0031\) \\
        Centered-logit MSE & \(2.6383\pm0.1761\) & \(1.2743\pm0.1840\) \\
        FFN CKA & \(0.7979\pm0.0340\) & \(0.9034\pm0.0129\) \\
        Matched feature correlation & \(0.7128\pm0.0095\) & \(0.8107\pm0.0015\) \\
        \bottomrule
    \end{tabular}
\end{table}

Curvature measurements consistently expose the first-layer input directions of independently trained smooth FFNs. For GELU, \(86.7\%\) of the hidden directions are recovered with absolute cosine similarity of at least \(0.90\); for SiLU, the corresponding fraction is \(91.1\%\). Recovery remains consistent, with modest variation across independently trained checkpoints: GELU \(\operatorname{DirRec}\) ranges from \(0.9042\) to \(0.9317\), while SiLU ranges from \(0.9270\) to \(0.9503\). The observed leakage is therefore not specific to one favorable training seed or checkpoint.

We separately isolate randomness in the extraction procedure by fixing one target model per activation and varying the extraction seed. This changes the probe inputs, output projections, and dictionary initialization while leaving the target model unchanged. Each extraction seed again uses three internal dictionary restarts.

\begin{table}[t]
    \centering
    \caption{Stability across extraction seeds for one fixed target model per activation. Values are mean \(\pm\) sample standard deviation over three extraction seeds.}
    \label{tab:attack-seed-stability}
    \begin{tabular}{lcc}
        \toprule
        Metric & GELU & SiLU \\
        \midrule
        \(\operatorname{DirRec}\) & \(0.9102\pm0.0214\) & \(0.9374\pm0.0161\) \\
        \(\operatorname{Hit}_{0.90}\) & \(0.8516\pm0.0413\) & \(0.9036\pm0.0316\) \\
        Surrogate accuracy (\%) & \(77.94\pm0.22\) & \(77.51\pm0.13\) \\
        Top-1 agreement (\%) & \(93.27\pm0.38\) & \(94.90\pm0.16\) \\
        KL divergence & \(0.0376\pm0.0071\) & \(0.0195\pm0.0027\) \\
        FFN CKA & \(0.8284\pm0.0049\) & \(0.9024\pm0.0117\) \\
        \bottomrule
    \end{tabular}
\end{table}

The narrow standard deviations show that successful recovery does not depend on one favorable probe set, output projection, or optimizer trajectory. Together, Tables~\ref{tab:independent-models} and~\ref{tab:attack-seed-stability} isolate the two principal sources of experimental variability: recovery persists across independently trained target models and across independent realizations of the adversarial procedure.

\paragraph{Effect of the number of projected Hessians.}
We vary the number of projected Hessians over \(T\in\{4,8,16,32,64\}\). Under the optimized attack, all \(T\) projected Hessians are constructed offline from a single shared finite-difference stencil. Consequently, every configuration requires the same \(2d^2+1=8{,}193\) vector-output oracle calls. The results exhibit a clear transition from insufficient measurements to stable structural recovery.

\begin{table}[t]
    \centering
    \caption{Structural recovery as a function of the number of projected Hessians. Each configuration uses a fixed extraction seed and the same \(8{,}193\) vector-output oracle calls.}
    \label{tab:query-budget}
    \begin{tabular}{lrrr}
        \toprule
        Activation & \(T\) &
        \(\operatorname{DirRec}\) &
        \(\operatorname{Hit}_{0.90}\) \\
        \midrule
        GELU & \(4\)  & \(0.5292\) & \(0.0000\) \\
        GELU & \(8\)  & \(0.8431\) & \(0.6641\) \\
        GELU & \(16\) & \(0.9201\) & \(0.8672\) \\
        GELU & \(32\) & \(0.9333\) & \(0.8828\) \\
        \midrule
        SiLU & \(4\)  & \(0.5226\) & \(0.0000\) \\
        SiLU & \(8\)  & \(0.7535\) & \(0.3594\) \\
        SiLU & \(16\) & \(0.9432\) & \(0.9219\) \\
        SiLU & \(32\) & \(0.9634\) & \(0.9453\) \\
        \bottomrule
    \end{tabular}
\end{table}

At \(T=4\), neither activation recovers a single direction above the \(0.90\) threshold. This is consistent with the relaxed necessary dimension count, which requires \(T\geq5\) for \(d=64\) and \(m=128\). At \(T=8\), the measurements contain substantial structural information, but recovery remains incomplete: for the fixed extraction seed, GELU attains \(\operatorname{DirRec}=0.8431\), while SiLU attains \(0.7535\). Recovery rises sharply at \(T=16\), whereas doubling the number of projected Hessians to \(T=32\) produces only modest additional gains. The low-measurement regime also distinguishes observation fitting from parameter identification. At \(T=4\), the dictionary optimizer can attain low Hessian reconstruction loss while recovering incorrect directions. Reconstruction loss is therefore useful for selecting among restarts once the measurements are sufficiently informative, but it is not itself a certificate of structural recovery.

\paragraph{Recovery across transformer depth.}
To test whether leakage is specific to the final transformer block, we apply the \(T=16\) extraction procedure to every FFN of one trained GELU target. Table~\ref{tab:block-depth} reports the resulting structural recovery. All four FFNs expose at least \(81.3\%\) of their hidden directions above \(0.90\) alignment, showing that the vulnerability is not confined to the final block. Block~2 achieves the highest recovery, with \(\operatorname{DirRec}=0.9391\) and \(\operatorname{Hit}_{0.90}=0.9063\).

\begin{table}[t]
    \centering
    \caption{Finite-difference structural recovery across all transformer blocks of one trained GELU target model.}
    \label{tab:block-depth}
    \begin{tabular}{rccc}
        \toprule
        Block &
        \(\operatorname{DirRec}\) &
        \(\operatorname{Hit}_{0.90}\) &
        Recon. loss \\
        \midrule
        \(0\) & \(0.8894\) & \(0.8125\) & \(0.00625\) \\
        \(1\) & \(0.9308\) & \(0.8672\) & \(0.00555\) \\
        \(2\) & \(0.9391\) & \(0.9063\) & \(0.00440\) \\
        \(3\) & \(0.9201\) & \(0.8672\) & \(0.00874\) \\
        \bottomrule
    \end{tabular}
\end{table}

\paragraph{Exact-curvature diagnostic and dictionary optimization.}
Exact Hessians serve only as a privileged diagnostic for separating factorization error from finite-difference error. At \(T=16\), GELU achieves \((\operatorname{DirRec},\operatorname{Hit}_{0.90})=(0.9341,0.8672)\) with exact Hessians and \((0.9201,0.8672)\) with finite differences. Thus, finite-difference recovery closely approaches privileged exact-curvature recovery, with identical fractions of directions exceeding \(0.90\) alignment. Table~\ref{tab:init-ablation} evaluates the initialization strategy. Because \(T\) centered Hessians span at most \(T-1=15\) dimensions while the FFN has \(m=128\) directions, sequential pursuit alone cannot initialize the full dictionary reliably. Joint Adam refinement substantially improves the pursuit solution, raising \(\operatorname{DirRec}\) from \(0.5714\) to \(0.9337\). Pursuit-initialized Adam and random-restart Adam attain comparable recovery, indicating that joint optimization is more important than the initialization strategy in this setting. For a uniform pipeline that does not require a separate pursuit stage, we use random initialization, joint Adam optimization, and restart selection by observed Hessian reconstruction loss throughout.

\begin{table}[t]
    \centering
    \caption{Dictionary-initialization ablation using exact Hessians for GELU at \(T=16\).}
    \label{tab:init-ablation}
    \begin{tabular}{lccc}
        \toprule
        Initialization &
        \(\operatorname{DirRec}\) & \(\operatorname{Hit}_{0.90}\) & Recon. loss \\
        \midrule
        Pursuit only & \(0.5714\) & \(0.1016\) & -- \\
        Pursuit + Adam & \(0.9337\) & \(0.8906\) & \(0.00212\) \\
        Random restarts + Adam & \(0.9341\) & \(0.8672\) & \(0.00222\) \\
        \bottomrule
    \end{tabular}
\end{table}

% Sequential pursuit fails to resolve the full overcomplete dictionary, and subsequent Adam refinement only partially repairs the initialization. Random multi-restart joint optimization instead recovers \(94.5\%\) of the hidden directions above \(0.90\) alignment. All primary experiments therefore use random initialization, joint Adam optimization, and restart selection based solely on the observed Hessian reconstruction objective.

Taken together, these results demonstrate structural recovery across both activations, multiple extraction seeds, every transformer block, and black-box finite-difference measurements. At \(T=16\), the attack consistently recovers most first-layer directions in trained GELU and SiLU FFNs. The sharp transition between \(T=8\) and \(T=16\) further shows that recovery arises from sufficiently diverse shared-curvature measurements rather than merely fitting the observed Hessians.

\subsection{Empirical Verification of Local Identifiability}
\label{sec:empirical-identifiability}

The structural recovery results above show a pronounced transition between \(T=8\) and \(T=16\). We now examine whether this transition is reflected in the local geometry of the Hessian factorization itself. Following Section~\ref{sec:local-identifiability}, we evaluate the restricted Jacobian \(J_{\mathrm{res}}(U,\Gamma)\) at each trained FFN instance and measure $\kappa = \sigma_{\min}\!\left(J_{\mathrm{res}}(U,\Gamma)\right)$,
together with the scale-normalized quantity $\frac{\kappa}{\sigma_{\max}(J_{\mathrm{res}})}$. A smallest singular value separated from zero provides numerical support for the injectivity condition in Theorem~\ref{thm:local-identifiability}, whereas its magnitude quantifies how well conditioned the local inverse problem is.

We evaluate all three independently trained checkpoints for both GELU and SiLU. For each checkpoint, we fix one probe $x_p$ and use a common nested sequence of output projections $c_t$ for $T\in \{4,8,16,32\}$. 

The coefficients \(\Gamma\) are constructed from the corresponding trained FFN parameters and probe--projection pairs, while \(\dot\Gamma\) remains unrestricted in the Jacobian test, matching the relaxed factorization analyzed in Theorem~\ref{thm:local-identifiability}. Table~\ref{tab:jacobian-identifiability} summarizes the results. For \(T=4\), the restricted Jacobian cannot be injective: its parameter dimension is \(8576\), while the Hessian observations provide only \(8320\) independent coordinates. This agrees exactly with the necessary dimension count in Corollary~\ref{cor:dimension-count}. At \(T=8\), however, all six trained FFNs already have numerically full-column-rank restricted Jacobians. The key distinction is conditioning. For GELU, the mean normalized smallest singular value is only \(0.0114\) at \(T=8\), with a mean condition number of \(90.4\). At \(T=16\), these improve sharply to \(0.0395\) and \(26.3\), respectively. For SiLU, the same transition is even more pronounced: the normalized smallest singular value increases from \(0.0058\) to \(0.0227\), while the mean condition number falls from \(196.1\) to \(45.2\). Conditioning improves further at \(T=32\).

\begin{table}[t]
	\centering
	\caption{Restricted-Jacobian identifiability diagnostics across three independently trained checkpoints. The results are computed with one common $x_p$ and $T$ output projections. Values are mean \(\pm\) standard deviation over training seeds. Here \(\kappa=\sigma_{\min}(J_{\mathrm{res}})\), while \(\kappa/\sigma_{\max}\) provides a scale-normalized measure of local conditioning. \(\dagger\):   Injectivity is ruled out by the necessary dimension count in Corollary~\ref{cor:dimension-count}.}
	\label{tab:jacobian-identifiability}
	\begin{tabular}{lccccc}
		\toprule
		Activation & \(T\) & \(\kappa\) &
		\(\kappa/\sigma_{\max}\) &
		\(\operatorname{cond}(J_{\mathrm{res}})\) & Injective \\
		\midrule
		GELU & 4  & -- & -- & -- & \(\dagger\) \\
		GELU & 8  & \(0.0266\pm0.0055\) &
		\(0.0114\pm0.0022\) & \(90.4\pm18.6\) & 3/3 \\
		GELU & 16 & \(0.1044\pm0.0144\) &
		\(0.0395\pm0.0089\) & \(26.3\pm6.8\) & 3/3 \\
		GELU & 32 & \(0.2047\pm0.0227\) &
		\(0.0632\pm0.0090\) & \(16.0\pm2.1\) & 3/3 \\
		\midrule
		SiLU & 4  & -- & -- & -- & \(\dagger\)\\
		SiLU & 8  & \(0.0123\pm0.0046\) &
		\(0.0058\pm0.0022\) & \(196.1\pm96.0\) & 3/3 \\
		SiLU & 16 & \(0.0489\pm0.0089\) &
		\(0.0227\pm0.0043\) & \(45.2\pm8.3\) & 3/3 \\
		SiLU & 32 & \(0.1044\pm0.0093\) &
		\(0.0445\pm0.0018\) & \(22.5\pm0.9\) & 3/3 \\
		\bottomrule
	\end{tabular}
\end{table}

These results sharpen the interpretation of the recovery transition. At \(T=8\), local identifiability is already numerically supported, but the factorization remains substantially ill-conditioned. Increasing to \(T=16\) does not merely add more equations; it produces a markedly better-conditioned local inverse problem across every trained checkpoint. This improvement is consistent with the transition from partial recovery at \(T=8\) to reliable direction recovery at \(T=16\). The systematically weaker conditioning of SiLU is likewise consistent with its slightly lower structural recovery scores. We therefore view \(T=16\) as an operating point at which the curvature measurements are both locally informative and sufficiently well-conditioned for stable numerical extraction.

\subsection{Functional Replacement}
\label{sec:functional-replacement}
Structural recovery identifies the normalized input directions of \(W_1\), but does not by itself determine the complete FFN branch. We therefore evaluate whether the recovered directions provide a useful basis for high-fidelity functional replacement. During completion, the recovered directions remain fixed while signed first-layer scales, first-layer biases, second-layer weights, and second-layer biases are fitted from additional branch input--output queries. The completed FFN is then substituted into the full classifier, while all remaining target-model parameters are left unchanged.

The results in Table~\ref{tab:independent-models} show that the recovered structure supports accurate replacement across independently trained target models. GELU surrogates achieve \(77.94\%\pm0.47\%\) test accuracy, compared with \(79.18\%\pm0.36\%\) for the corresponding targets, yielding a modest accuracy drop of \(1.24\pm0.23\) percentage points. SiLU surrogates achieve \(77.61\%\pm0.49\%\), compared with \(78.18\%\pm0.49\%\) for the targets, yielding a drop of \(0.57\pm0.09\) percentage points. Top-1 prediction agreement remains \(92.81\%\pm0.22\%\) for GELU and \(94.97\%\pm0.36\%\) for SiLU.

The low KL divergence and centered-logit mean-squared error show that fidelity extends beyond the predicted class. FFN CKA and matched hidden-feature correlation further indicate that the completed branches reproduce substantial aspects of the target representation structure. Completion therefore does not merely recover aggregate test accuracy; it yields replacements whose predictions, logits, and hidden representations closely track those of the target models.

The completion-query ablation shows that this fidelity does not require the largest evaluated query budget. Target--surrogate agreement improves rapidly at lower budgets and largely stabilizes by: $Q_{\mathrm{fit}}=12{,}000$. Doubling the completion budget to \(24{,}000\) produces only marginal additional improvement.

Functional replacement complements, but does not replace, the structural claim. Behavioral agreement alone does not imply identification of a target model's internal parameters. Here, however, high predictive fidelity is obtained while the independently recovered \(W_1\) directions remain fixed. The completion results therefore demonstrate a practical consequence of structural leakage: the recovered direction dictionary provides a reliable foundation for constructing a high-fidelity substitute for the targeted FFN.

\subsection{Adaptive Robustness to Output Perturbations}
\label{sec:adaptive-robustness}
We evaluate whether simple perturbations of the branch-oracle responses prevent structural extraction. We consider two mechanisms: deterministic rounding of each returned output coordinate to four decimal places, equivalent to a quantization interval of \(\Delta=10^{-4}\), and independent additive Gaussian noise with standard deviation \(10^{-4}\). At the default clean-oracle step \(h=10^{-2}\), recovery collapses under both perturbations. Output rounding reduces \(\operatorname{DirRec}\) to \(0.3255\), while Gaussian noise reduces it to \(0.3295\); in both cases, \(\operatorname{Hit}_{0.90}=0\).

A fixed-step evaluation, however, does not represent an adaptive adversary. For bounded output perturbations, Equation~\eqref{eq:fd-error-tradeoff} gives the entrywise finite-difference tradeoff
\begin{equation}
\left|
\widehat H_{t,ij}-H_{t,ij}
\right|
=
O(h^2)
+
O\!\left(
\frac{\tau_{c_t}}{h^2}
\right),
\label{eq:adaptive-fd-tradeoff}
\end{equation}
where, the first term is the finite-difference truncation error and the second captures amplification of oracle-response errors. Increasing \(h\) suppresses the perturbation-amplification term, provided that the resulting truncation error remains controlled. Although Gaussian noise is not deterministically bounded, its contribution to each finite-difference estimate is likewise scaled by \(h^{-2}\), motivating the same adaptive choice of measurement scale.

\begin{figure}[t]
    \centering
    \includegraphics[width=0.98\columnwidth]
    {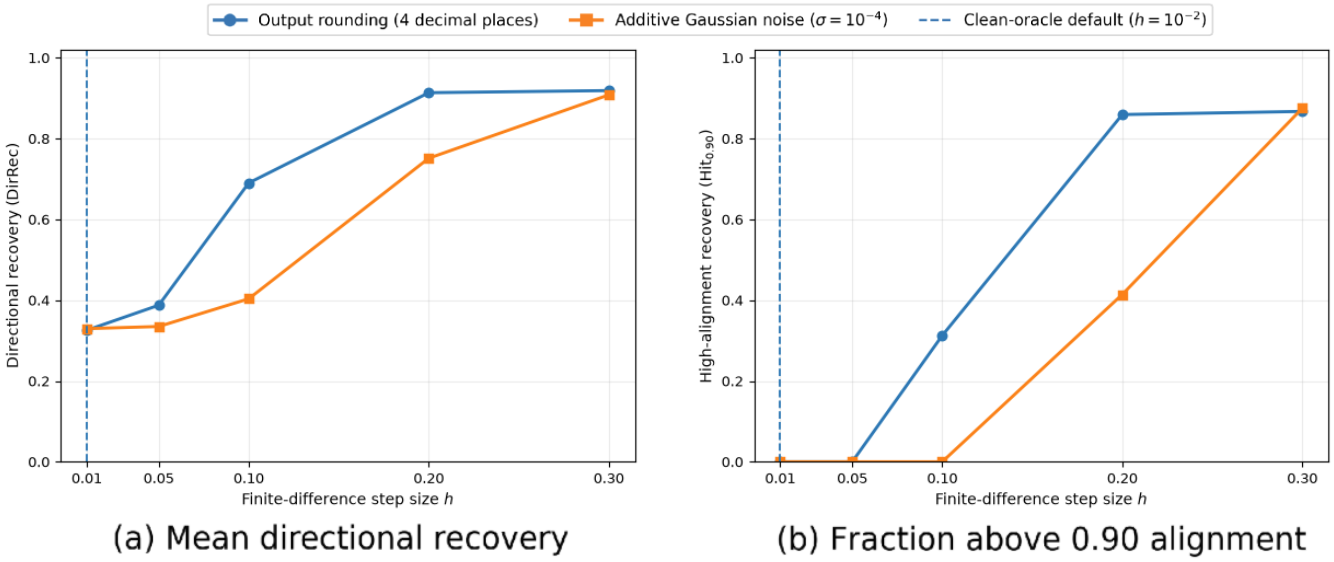}
    \caption{Adaptive structural recovery under perturbed oracle responses. At the default clean-oracle step \(h=10^{-2}\), rounding to four decimal places and additive Gaussian noise with standard deviation \(10^{-4}\) reduce recovery to the low-alignment regime. Increasing the finite-difference step suppresses perturbation amplification and restores strong recovery. Panel~(a) reports \(\operatorname{DirRec}\), while Panel~(b) reports \(\operatorname{Hit}_{0.90}\).}
    \label{fig:adaptive-output-perturbation}
\end{figure}

Figure~\ref{fig:adaptive-output-perturbation} shows that the apparent protection at \(h=10^{-2}\) is not robust to adaptation of the measurement scale. Under output rounding, \(\operatorname{DirRec}\) increases from \(0.3255\) at \(h=0.01\) to \(0.9188\) at \(h=0.30\). Under Gaussian noise, it increases from \(0.3295\) to \(0.9081\) over the same range. At \(h=0.30\), \(\operatorname{Hit}_{0.90}\) reaches \(0.8672\) under rounding and \(0.8750\) under Gaussian noise.

\begin{table}[H]
    \centering
    \caption{Adaptive recovery under four-decimal output rounding and
    Gaussian noise (\(\sigma=10^{-4}\)) for one GELU target
    (\(T=16\); fixed extraction seed).}
    \label{tab:adaptive-robustness}
    \begin{tabular}{llccc}
        \toprule
        Perturbation & \(h\) &
        \(\operatorname{DirRec}\) &
        \(\operatorname{Hit}_{0.90}\) &
        Recon. loss \\
        \midrule
        Rounding & \(0.01\) & \(0.3255\) & \(0.0000\) & \(0.1115\) \\
        & \(0.05\) & \(0.3875\) & \(0.0000\) & \(0.1912\) \\
        & \(0.10\) & \(0.6901\) & \(0.3125\) & \(0.1284\) \\
        & \(0.20\) & \(0.9135\) & \(0.8594\) & \(0.0160\) \\
        & \(0.30\) & \(0.9188\) & \(0.8672\) & \(0.0052\) \\
        \midrule
        Gaussian noise & \(0.01\) & \(0.3295\) & \(0.0000\) & \(0.2538\) \\
        & \(0.05\) & \(0.3348\) & \(0.0000\) & \(0.2523\) \\
        & \(0.10\) & \(0.4034\) & \(0.0000\) & \(0.2437\) \\
        & \(0.20\) & \(0.7513\) & \(0.4141\) & \(0.1022\) \\
        & \(0.30\) & \(0.9081\) & \(0.8750\) & \(0.0301\) \\
        \bottomrule
    \end{tabular}
\end{table}

These results show that both evaluated perturbation mechanisms can be substantially bypassed by adapting the finite-difference by selecting $h$ that minimizes the observed reconstruction loss. Note that evaluating $l$ different $h$'s costs $l\times 8193$ queries, if each candidate requires its own stencil. They do not establish that rounding or additive noise can never prevent extraction. Rather, they show that robustness measured at a single fixed value of \(h\) can substantially overestimate resistance to an adaptive adversary. Perturbation-based defenses should therefore be evaluated over adversarial choices of the finite-difference scale, together with the effect of the perturbation on legitimate model outputs.

\subsection{Summary of Empirical Findings}
\label{sec:experimental-summary}
The experiments establish that curvature-based extraction reveals genuine internal structure rather than merely reproducing target behavior. Using \(T=16\) projected Hessians constructed offline from a single \(8{,}193\)-query vector-output stencil, the attack attains mean \(\operatorname{DirRec}=0.9187\) for GELU and \(0.9402\) for SiLU, recovering \(86.7\%\) and \(91.1\%\) of hidden directions above \(0.90\) alignment. Recovery remains high across independently trained models, extraction seeds, and transformer blocks.

This recovery is nontrivial. At \(T=4\), no direction exceeds the \(0.90\) threshold despite low Hessian reconstruction error; \(T=8\) yields only partial recovery, while \(T=16\) enters a stable high-recovery regime. Moreover, joint Adam optimization substantially outperforms pursuit alone, while pursuit-initialized and random-restart Adam attain comparable recovery, showing that successful extraction requires both sufficiently diverse curvature measurements and joint factorization. The recovered directions are also operationally useful. After fixing them and fitting only the remaining FFN parameters, the resulting surrogates preserve more than \(92\%\) top-1 agreement and remain within \(1.24\) and \(0.57\) percentage points of GELU and SiLU target accuracy, respectively. Finally, output rounding and additive noise that appear protective at a fixed finite-difference step can be substantially bypassed by adapting the measurement scale. These results establish an end-to-end path from black-box second-order observations to hidden first-layer recovery and high-fidelity functional replacement. Under the stated oracle model, smooth FFN curvature forms an exploitable structural side channel.

\section{Conclusion}
\label{sec:conclusion}
We introduced a black-box curvature cryptanalysis of smooth transformer feed-forward networks under chosen-input raw-output access to a designated FFN branch. The key observation is that projected input Hessians share the same hidden rank-one factors induced by the first-layer weights. Joint factorization of these measurements therefore exposes the normalized first-layer direction dictionary, turning second-order responses into a structural extraction channel. Since the oracle returns the complete output vector, the \(T=16\) projected Hessians are constructed offline from a single \(2d^2+1=8{,}193\)-query stencil. On independently trained vision transformers with GELU and SiLU activations, the attack achieves mean \(\operatorname{DirRec}\) values of \(0.9187\) and \(0.9402\), recovering \(86.7\%\) and \(91.1\%\) of hidden directions above \(0.90\) absolute cosine alignment. Recovery remains high across training seeds, extraction runs, and transformer blocks. When the recovered directions are fixed and only the remaining FFN parameters are fitted, the resulting surrogates preserve more than \(92\%\) top-1 agreement while remaining within \(1.24\) and \(0.57\) percentage points of GELU and SiLU target accuracy, respectively.

These results show that the leakage is both structural and operational: black-box curvature measurements reveal hidden first-layer geometry that can be reused to construct a high-fidelity substitute. Finite-difference recovery is comparable to the privileged exact-Hessian diagnostic, and the evaluated rounding and noise defenses can be substantially weakened by adapting the finite-difference scale. Under the stated block-level oracle model, smooth transformer FFN curvature therefore constitutes a concrete structural side channel, extending model extraction beyond behavioral imitation to recovery of internal parameter geometry.

\paragraph{Scope beyond vision transformers.}
Although our experiments use vision transformers, the curvature mechanism acts on the FFN branch and is not inherently tied to visual inputs. Under the same architectural and oracle assumptions, the analysis extends to encoder-only, decoder-only, and encoder--decoder transformers whose feed-forward blocks have the smooth two-layer form: $g(x) = W_2\phi(W_1x+b_1)+b_2$. Many modern language models instead use gated FFNs, such as SwiGLU, whose second-order structure contains coupled terms involving both gate and value directions. Extending curvature cryptanalysis to these architectures requires a richer structured factorization and remains an important direction for future work.

%Bibliography
\bibliographystyle{unsrt}  
\bibliography{references} 

\end{document}